\def\eqref#1{equation~\ref{#1}}
\def\1{\bm{1}}
\DeclareMathAlphabet{\mathsfit}{\encodingdefault}{\sfdefault}{m}{sl}
\SetMathAlphabet{\mathsfit}{bold}{\encodingdefault}{\sfdefault}{bx}{n}
\newtheorem{theorem}{Theorem}
\newtheorem{lemma}[theorem]{Lemma}
\newcommand{\mbf}[1]{\mathbf{#1}}
\newcommand{\mbb}[1]{\mathbb{#1}}
\newcommand{\mrm}[1]{\mathrm{#1}}
\newcommand{\mcal}[1]{\mathcal{#1}}
\newcommand{\T}{\intercal}
\newcommand{\p}{\partial}
\title{Cubic Spline Smoothing Compensation for \\ Irregularly Sampled Sequences}
\author{Jing Shi, Jing Bi\\
University of Rochester\\
\texttt{\{j.shi, jing.bi\}@rochester.edu} \\
\And
Yingru Liu \\
Stony Brook University\\
\texttt{yingru.liu@stonybrook.edu}\\
\And
Chenliang Xu \\
University of Rochester \\
\texttt{chenliang.xu@rochester.edu}\\
}
\begin{document}

\maketitle

\begin{abstract}
  The marriage of recurrent neural networks and neural ordinary differential networks (ODE-RNN) is effective in modeling irregularly sampled sequences.
  While ODE produces the smooth hidden states between observation intervals, the RNN will trigger a hidden state jump when a new observation arrives, thus cause the interpolation discontinuity problem.
  To address this issue, we propose the \textit{cubic spline smoothing compensation}, which is a stand-alone module upon either the output or the hidden state of ODE-RNN and can be trained end-to-end.
  We derive its analytical solution and provide its theoretical interpolation error bound.
  Extensive experiments indicate its merits over both ODE-RNN and cubic spline interpolation.
 \end{abstract}

 \section{Introduction}
 Recurrent neural networks (RNNs) are commonly used for modeling regularly sampled sequences~\citep{cho2014learning}.
 However, the standard RNN can only process discrete series without considering the unequal temporal intervals between sample points, making it fail to model irregularly sampled time series commonly seen in domains, e.g., healthcare~\citep{rajkomar2018scalable} and finance~\citep{fagereng2017imputing}.
 While some works adapt RNNs to handle such irregular scenarios, they often assume an exponential decay (either at the output or the hidden state) during the time interval between observations~\citep{che2018recurrent,cao2018brits}, which may not always hold.

 To remove the exponential decay assumption and better model the underlying dynamics, \cite{chen2018neural} proposed to use the neural ordinary differential equation (ODE) to model the continuous dynamics of hidden states during the observation intervals. Leveraging a learnable ODE parametrized by a neural network, their method renders higher modeling capability and flexibility.
 
 However, an ODE determines the trajectory by its initial state, and it fails to adjust the trajectory according to subsequent observations.
 A popular way to leverage the subsequent observations is ODE-RNN~\citep{rubanova2019latent,de2019gru}, which updates the hidden state upon observations using an RNN, and evolves the hidden state using an ODE between observation intervals.
 While ODE produces smooth hidden states between observation intervals, the RNN will trigger a hidden state jump at the observation point. 
 This inconsistency (discontinuity) is hard to reconcile, thus jeopardizing continuous time series modeling, especially for interpolation tasks (Fig.~\ref{fig:teasing} top-left).
 
 We propose a \emph{Cubic Spline Smoothing Compensation (CSSC)} module to tackle the challenging discontinuity problem, and it is especially suitable for continuous time series interpolation.
 Our CSSC employs the cubic spline as a means of compensation for the ODE-RNN to eliminate the jump, as illustrated in Fig.~\ref{fig:teasing} top-right.
 While the latent ODE~\citep{rubanova2019latent} with an encoder-decoder structure can also produce continuous interpolation, CSSC can further ensure the interpolated curve pass strictly through the observation points.
 Importantly, we can derive the closed-form solution for CSSC and obtain its interpolation error bound. 
 The error bound suggests two key factors for a good interpolation: the time interval between observations and the performance of ODE-RNN.
 Furthermore, we propose the \emph{hidden CSSC} that aims to compensate for the hidden state of ODE-RNN (Fig.~\ref{fig:teasing} bottom), which can still assuage the discontinuity problem but is more efficient when the observations are high-dimensional and only have continuity on the semantic level.
 We conduct extensive experiments and ablation studies to demonstrate the effectiveness of CSSC and hidden CSSC, and both of them outperform other comparison methods.

 \begin{figure}[t!] 
   \centering\includegraphics[width=1\columnwidth]{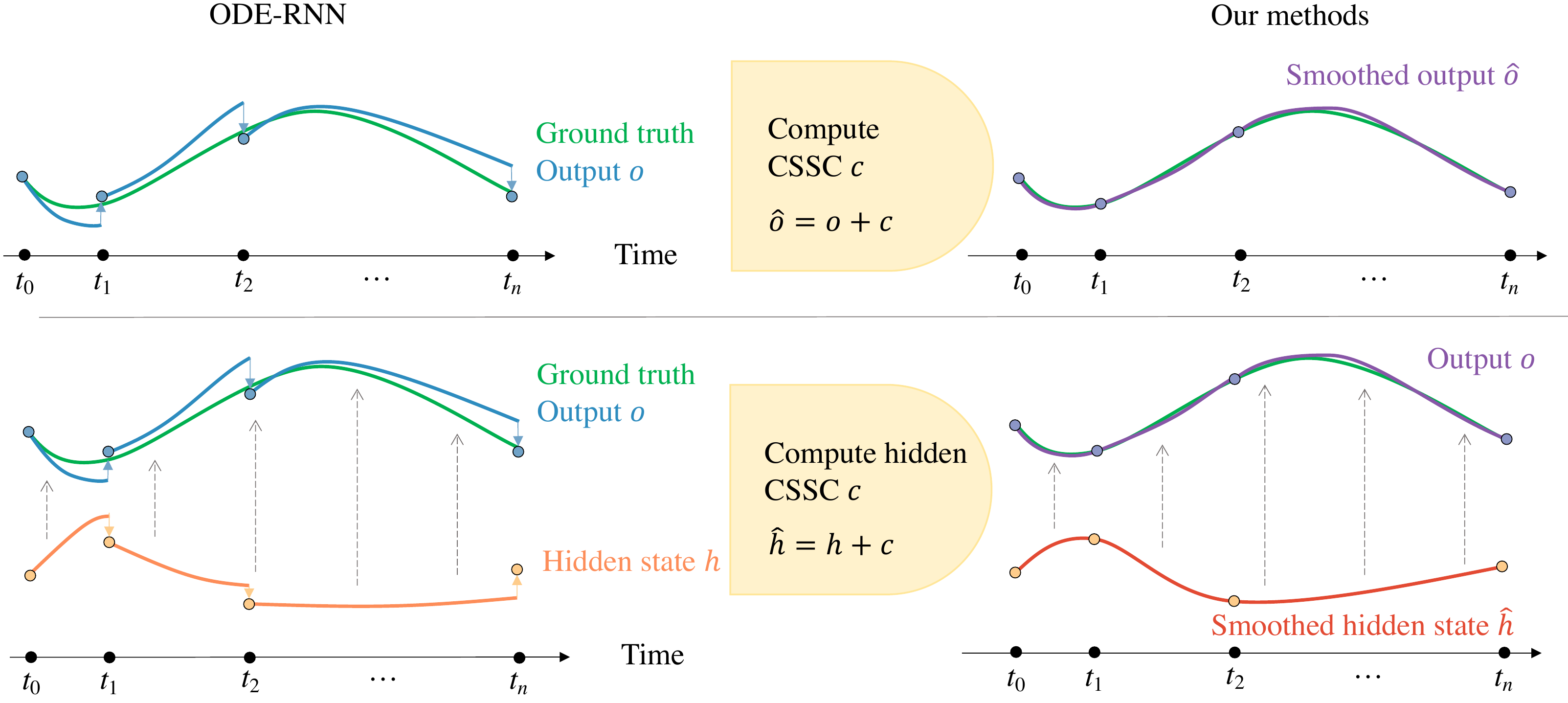} 
   \caption{The illustration of ODE-RNN and our methods. The top left is ODE-RNN showing the interpolation curve jump at the observation points. The top right is the smoothed output by our CSSC, where the jump is eliminated and the output strictly pass the observation. The bottom left shows the output discontinuity of ODE-RNN caused by the hidden state discontinuity. The bottom right shows that our hidden CSSC is applied to the hidden state of ODE-RNN, resulting the smooth hidden state and so as the output.}
   \label{fig:teasing}
 \end{figure}
 
 \section{Related Work}
 Spline interpolation is a practical way to construct smooth curves between a number of points~\citep{de1978practical}, even for unequally spaced points.
 Cubic spline interpolation leverages the piecewise third order polynomials to avoid the Runge's phenomenon~\citep{runge1901empirische}, and is applied as a classical way to impute missing data ~\citep{che2018recurrent}.
 
 Recent literature focuses on adapting RNNs to model the irregularly sampled time series, given their strong modeling ability.
 Since standard RNNs can only process discrete series without considering the unequal temporal intervals between sample points, different improvements were proposed.
 One solution is to augment the input with the observation mask or concatenate it with the time lag $\Delta t$ and expect the network to use interval information $\Delta t$ in an unconstrained manner~\citep{lipton2016directly,mozer2017discrete}.
 While such a flexible structure can achieve good performance under some circumstances~\citep{mozer2017discrete}, a more popular way is to use prior knowledge for missing data imputation. 
 GRU-D~\citep{che2018recurrent} imputes missing values with the weighted sum of exponential decay of the previous observation and the empirical mean.
 \citet{shukla2018interpolationprediction} employs the radial basis function kernel to construct an interpolation network.
 \citet{cao2018brits} let hidden state exponentially decay for non-observed time points and use bi-directional RNN for temporal modeling.
 
 Another track is the probabilistic generative model.
 Due to the ability to model the missing data's uncertainty, Gaussian processes (GPs) are adopted for missing data imputing~\citep{futoma2017learning,tandata,pmlr-v106-moor19a}.
 However, this approach introduced several hyperparameters, such as the covariance function, making it hard to fine-tune in practice.
 Neural processes~\citep{pmlr-v80-garnelo18a} eliminate such constraints by introducing a global latent variable that represents the whole process.
 Generative adversarial networks are also adopted for imputing~\citep{luo2018multivariate}.
 
 Recently, neural ODEs~\citep{chen2018neural} utilize a continuous state transfer function parameterized by a neural network to learn the temporal dynamics.
 \citet{rubanova2019latent} combine the RNN and ODE to reconcile both the new observation and latent state evolution between observations.
 \citet{de2019gru} update the ODE with the GRU structure with Bayesian inference at observations.

  While the ODE produces the smooth hidden states between observation intervals, the RNN will trigger a jump of the hidden state at the observation point, leading to a discontinuous hidden state along the trajectory.
  This inconsistency (discontinuity) is hard to reconcile, thus jeopardizing the modeling of continuous time series, especially for interpolation tasks.
  Our method tackles this jumping problem by introducing the cubic spline as a compensation for the vanilla ODE-RNN.

 \section{Methods}
 In this section, we first formalize the irregularly sampled time series interpolation problem (Sec.~\ref{sec:problem_definition}), then introduce the background of ODE-RNN (Sec.~\ref{sec:background}). 
 Based upon ODE-RNN, we present CSSC and its closed-form solution (Sec.~\ref{sec:cssc}), and illustrate the inference and training procedure (Sec.~\ref{sec:infer_train}).
 Finally, we provide the interpolation error bound of CSSC (Sec.~\ref{sec:err_bound}) and
 describe an useful extension of CSSC (Sec.~\ref{sec:hidden_cssc}).
 
 \subsection{Problem Definition}
 \label{sec:problem_definition}
 We focus on the interpolation task.
 Given an unknown underlying function $\mbf x(t):\mbb R\rightarrow \mbb R^d$, $t\in [a,b]$, and a set of $n+1$ observations $\{\mbf x_k|\mbf x_k=\mbf x(t_k)\}_{k=0}^n \in \mbb{R}^d$  sampled from $\mbf x(t)$ at the irregularly spaced time points $\Pi: a=t_0<t_1<...<t_n=b$, the goal is to learn a function $F(t): \mbb{R}\rightarrow \mbb{R}^d$ to approximate $\mbf x$, such that $F(t_k) = \mbf x_k$.  
 \subsection{Background of ODE-RNN}
 \label{sec:background}
 ODE-RNN~\citep{rubanova2019latent} achieves the interpolation by applying ODE and RNN interchangeably through a time series, illustrated in top-left of Fig.~\ref{fig:teasing}.
 The function $F$ on time interval $t\in [t_k, t_{k+1})$ is described by a neural ODE with the initial hidden state $\mbf h(t_k)$: 
 \begin{align}
   \dot{\mathbf{h}}(t) &= f(\mathbf{h}(t)); \label{eqn:trans_func}\\
   \mbf o(t) &= g(\mathbf{h}(t)), \label{eqn:ode_output_func}
 \end{align}
 where the $\mbf h \in \mbb R^m$ is the hidden embedding of the data, $\dot{\mbf h} = \frac{\mrm d \mbf h}{\mrm d t}$ is the temporal derivative of the hidden state, $\mbf o\in \mbb R^d$ is the interpolation output of $F(t)$. Here, $f:\mbb R^m\rightarrow \mbb R^m$ and $g:\mbb R^m\rightarrow \mbb R^d$ are the transfer function and the output function parameterized by two neural networks, respectively.
 At the observation time $t=t_k$, the hidden state will be updated by an RNN as:
 \begin{align}
   \mbf h(t_k) &= \mrm{RNNCell}(\mbf h(t_k^-), \mbf x_k);\label{eqn:rnncell}\\
  \mbf o(t_k) &= g(\mbf{h}(t_k)), \label{eqn:interp_point_output_func}
 \end{align}
 where the input $\mbf x\in \mbb R^d$, $t_k^-$ and $t_k^+$ are the left- and right-hand limits of $t_k$.
 The above formulation has two downsides. 
 The first is the discontinuity problem: while the function described by ODE is right continuous $\mbf o(t_k) = \mbf o(t_k^+)$, the RNN cell in Eq. (\ref{eqn:rnncell}) renders the hidden state discontinuity $\mbf h(t_k^-) \neq \mbf h(t_k^+)$ and therefore output discontinuity $\mbf o(t_k^-) \neq \mbf o(t_k^+)$.
 The second is that the model cannot guarantee $\mbf o(t_k)= \mbf x_k$ without explicit constraints. 
 
 \subsection{Cubic Spline Smoothing Compensation}
 \label{sec:cssc}
 To remedy the two downsides, we propose the module \emph{Cubic Spline Smoothing Compensation  (CSSC)}, manifested in the top-right of Fig.~\ref{fig:teasing}. It computes a compensated output $\hat{\mbf o}(t)$ as:
 \begin{equation}
   \hat{\mbf o}(t) = \mbf c(t) + \mbf o(t),
   \label{eqn:hatoutput}
 \end{equation}
 where $\mbf o(t)$ is the ODE-RNN output, and the $\mbf c(t)$ is a compensation composed of piecewise continuous functions.
 Our key insight is that adding another continuous function to the already piecewise continuous  $\mbf o(t)$  will ensure the global continuity. 
 For simplicity, we set $\mbf c(t)$ as a piecewise polynomials function and then narrow it to a piecewise cubic function since it is the most commonly used polynomials for interpolation~\citep{burden1997numerical}.
 As the cubic spline is computed for each dimension of $\mbf c$ individually, w.l.o.g., we will discuss one dimension of the $\mbf{o}, \mbf{c}, \hat{\mbf o}, \mbf x$ and thus denote them as $o$, $c$, $\hat o, x$, respectively.
 $c(t)$ is composed with pieces as $c(t)=\sum_{k=0}^{n-1}c_k(t)$ with each piece $c_k$ defined at domain $[t_k, t_{k+1})$. 
 To guarantee the smoothness, we propose four constraints to $\hat o(t)$:
 \begin{enumerate}
   \item $\hat o(t_k^-) = \hat o(t_k^+) = x_k$, $k= 1, ..., n-1$, $\hat o(t_0)=x_0$, $\hat o(t_n) = x_n$  (output continuity);
   \item $\dot {\hat {o}}(t_k^-) = \dot {\hat o}(t_k^+)$, $k=1, ..., n-1$ (first order output continuity); 
   \item $\ddot {\hat {o}}(t_k^-) = \ddot {\hat o}(t_k^+)$, $k=1, ..., n-1$ (second order output continuity);
   \item $\ddot {\hat o}(t_0)=\ddot {\hat o}(t_n) = 0$ (natural boundary condition).
 \end{enumerate}

 The constraint 1 ensures the interpolation curves continuously pass through the observations.
 Constraint 2 and 3 enforce the first and second order continuity at the observation points, which usually holds when the underline curve $x$ is smooth.
 And constraint 4 specifies the natural boundary condition owing to the lack of information of the endpoints~\citep{burden1997numerical}. 
 
 Given $o(t)$ and such four constraints, $c(t)$ has unique analytical solution expressed in Theorem~\ref{thm:interp_solution}.
 \begin{theorem}
   \label{thm:interp_solution}
   Given the first order and second order jump difference of ODE-RNN as 
   \begin{align}
     \dot r_k &= \dot o(t_k^+) - \dot o(t_k^-);\label{eqn:dot_r} \\
     \ddot r_k &= \ddot o(t_k^+) - \ddot o(t_k^-). \label{eqn:ddot_r}
   \end{align}
   where the analytical expression of $\dot o$ and $\ddot o$ can be obtained as
   \begin{equation}
     \dot o = \frac{\partial g}{\partial \mbf h}f; \ \ \ \ \  \ddot o = f^\intercal\frac{\partial^2 g}{\partial \mbf h^2} f + \frac{\partial g}{\partial \mbf h}^\intercal\frac{\partial f}{\partial \mbf h}f, \label{eqn:dot_o}
   \end{equation}
   and the error defined as
   \begin{align}
     \epsilon_k^+ &= x_k - o(t_k^+); \\
     \epsilon_{k}^- & = x_{k} - o(t_{k}^-),
   \end{align}
   then $c_k$ can be uniquely determined as 
   \begin{align}
     c_k(t) &= \frac{M_{k+1} + \ddot r_{k+1} - M_k}{6\tau_k}(t - t_k)^3 + \frac{M_k}{2} (t - t_k)^2 + \nonumber \\
     &(\frac{\epsilon_{k+1}^- - \epsilon_k^+}{\tau_k} - \frac{\tau_k(M_{k+1} + \ddot r_{k+1} + 2M_k)}{6})(t - t_k) + \epsilon_k^+, \label{eqn:SSC}
   \end{align}
 where $M_k$ is obtained as
 \begin{equation}
 \mbf M=A^{-1}\mbf d, \label{eqn:comput_M}
 \end{equation}
 \begin{equation}
   A = 
   \begin{pmatrix}
      2 & \lambda_1 & & & \\
      \mu_2 & 2 & \lambda_2 & & \\
      &\ddots &\ddots & \ddots & \\
      & & \mu_{n-2} & 2 & \lambda_{n-2} \\
      & & & \mu_{n-1} & 2
   \end{pmatrix}, 
   \mbf{M} = \begin{pmatrix}
     M_1\\
     M_2\\
     \vdots\\
     M_{n-2}\\
     M_{n-1}
   \end{pmatrix},
   \mbf{d} = \begin{pmatrix}
     d_1\\
     d_2\\
     \vdots \\
     d_{n-2}\\
     d_{n-1}
   \end{pmatrix},
 \end{equation}
     $\tau_k = t_{k+1} - t_k$, $\mu_k=\frac{\tau_{k-1}}{\tau_{k-1} + \tau_k}$, $\lambda_k=\frac{\tau_k}{\tau_{k-1} + \tau_k}$, 
     $d_k=6\frac{\epsilon[t_k^+, t_{k+1}^-] - \epsilon[t_{k-1}^+, t_k^-]}{\tau_{k-1} + \tau_k} + \frac{6\dot r_k - 2\ddot r_k\tau_{k-1} - \ddot r_{k+1}\tau_k}{\tau_{k-1} + \tau_k}$, $\epsilon[t_k^+, t_{k+1}^-] = \frac{\epsilon_{k+1}^- - \epsilon_k^+}{\tau_k}$, $M_0=M_n=0$.
 \end{theorem}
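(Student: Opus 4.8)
The plan is to treat this as a variant of the classical natural cubic spline construction, in which the novelty is twofold: the quantities to be matched are the ODE-RNN errors $\epsilon_k^\pm$ rather than raw values, and the usual interknot smoothness requirements turn into \emph{jump-matching} conditions because the baseline $o$ is itself discontinuous in value, slope, and curvature. The key device, exactly as in the standard theory, is to parametrize each piece by its second derivative at the left knot. Since $c_k$ is cubic on $[t_k,t_{k+1})$, its second derivative $\ddot c_k$ is affine, so I would set $M_k:=\ddot c_k(t_k)$ and express everything in terms of $M_0,\dots,M_n$; this converts the task of finding the cubic coefficients into solving for the scalars $M_k$. As a preliminary sanity check I would confirm the expressions for $\dot o$ and $\ddot o$ in \eqref{eqn:dot_o} by differentiating $o=g(\mbf h)$ under $\dot{\mbf h}=f(\mbf h)$ with the chain rule, giving $\dot o=\frac{\p g}{\p \mbf h}f$ and, on differentiating once more, the stated Hessian-plus-Jacobian form.

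First I would encode the second-order condition (constraint 3). Writing $\ddot{\hat o}(t_k^-)=\ddot{\hat o}(t_k^+)$ as $\ddot c_{k-1}(t_k^-)-\ddot c_k(t_k)=\ddot o(t_k^+)-\ddot o(t_k^-)=\ddot r_k$ and reindexing shows that the left-limit curvature $\ddot c_k(t_{k+1}^-)$ must equal $M_{k+1}+\ddot r_{k+1}$. Hence on $[t_k,t_{k+1})$ the affine function $\ddot c_k$ is forced to interpolate $M_k$ at $t_k$ and $M_{k+1}+\ddot r_{k+1}$ at $t_{k+1}$. Integrating $\ddot c_k$ twice introduces two constants of integration, which I would fix from the value condition (constraint 1), namely $c_k(t_k)=x_k-o(t_k^+)=\epsilon_k^+$ and $c_k(t_{k+1}^-)=\epsilon_{k+1}^-$. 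These two substitutions directly produce the cubic, quadratic, and constant coefficients of $c_k$, and solving for the linear coefficient yields precisely the closed form in \eqref{eqn:SSC}. At this stage constraints 1 and 3 hold identically for every choice of the $M_k$.

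The remaining freedom in the $M_k$ is then removed by the first-order condition (constraint 2), and this is where the main effort lies. I would compute the right slope $\dot c_k(t_k)$ and the left slope $\dot c_{k-1}(t_k^-)$ from the integrated expressions and impose $\dot c_k(t_k)-\dot c_{k-1}(t_k^-)=\dot o(t_k^-)-\dot o(t_k^+)=-\dot r_k$. After collecting the $M$-terms this becomes, for each interior $k$, a relation of the form $\tau_{k-1}M_{k-1}+2(\tau_{k-1}+\tau_k)M_k+\tau_k M_{k+1}=(\text{error and jump terms})$; dividing by $\tau_{k-1}+\tau_k$ gives exactly row $k$ of $A\mbf M=\mbf d$ with the stated $\mu_k,\lambda_k$. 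The main obstacle is purely the algebraic bookkeeping: one must carry the divided differences $\epsilon[t_k^+,t_{k+1}^-]$ and $\epsilon[t_{k-1}^+,t_k^-]$ together with the curvature jumps $\ddot r_k,\ddot r_{k+1}$ and the slope jump $\dot r_k$ through the cancellations and land on the stated $d_k$. The subtle point is the asymmetry between the $2\ddot r_k\tau_{k-1}$ and $\ddot r_{k+1}\tau_k$ coefficients, which traces back to the fact that $\ddot r_{k+1}$ (but not $\ddot r_k$) is already absorbed into the right-hand piece through the $M_{k+1}+\ddot r_{k+1}$ interpolation of the previous step.

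Finally I would close the system and establish uniqueness. The natural boundary condition (constraint 4) supplies the two missing equations by fixing $M_0=M_n=0$, leaving the square tridiagonal system $A\mbf M=\mbf d$ for $M_1,\dots,M_{n-1}$. Since every row of $A$ has diagonal entry $2$ while its off-diagonals satisfy $|\mu_k|+|\lambda_k|=1<2$, the matrix is strictly diagonally dominant and therefore invertible, so $\mbf M=A^{-1}\mbf d$ is the unique solution. Substituting these $M_k$ back into \eqref{eqn:SSC} then determines each $c_k$ uniquely, which is the asserted analytical solution.
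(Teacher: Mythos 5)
Your proposal is correct and follows essentially the same route as the paper's proof: parametrizing each piece by the knot curvatures $M_k$, absorbing the curvature jump via $\ddot c_k(t_{k+1}^-)=M_{k+1}+\ddot r_{k+1}$, recovering the cubic form from the value and second-order conditions, reducing the first-order condition to the tridiagonal system $A\mbf M=\mbf d$, and invoking strict diagonal dominance for uniqueness. The only differences are presentational (you integrate the affine $\ddot c_k$ twice where the paper states the cubic directly), so there is nothing substantive to add.
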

 The proof for Theorem.~\ref{thm:interp_solution} is in Appx.~\ref{app:proof_interp}.
 The $\mbf c(t)$ is obtained by computing each $c(t)$ individually according to Theorem.~\ref{thm:interp_solution}. 
 
\textbf{Computational Complexity}. The major cost is the inverse of $A$, a tridiagonal matrix, whose inverse can be efficiently computed in $O(n)$ complexity with the tridiagonal matrix algorithm (implementation detailed in Appx.~\ref{app:inverse_matrix}). 
Another concern is that $\dot o$ and $\ddot o$ needs to compute Jacobian and Hessian in Eq.~(\ref{eqn:dot_o}). We can circumvent this computing cost by computing the numerical derivative or an empirical substitution, detailed in Appx.~\ref{app:numeric_derivative}.
 
 \textbf{Model Reduction}. Our CSSC can reduce to cubic spline interpolation if setting $\mbf o$ in Eq.~(\ref{eqn:hatoutput}) as zero.
 In light of this, we further analyze our model with techniques used for cubic spline interpolation and experimentally show our advantages against it in Sec.~\ref{sec:toy_exp}. 

 \subsection{Inference and Training}
 \label{sec:infer_train}
 For inference, firstly compute the predicted value $\mbf o$ from ODE-RNN (Eq.~(\ref{eqn:trans_func}-\ref{eqn:interp_point_output_func})), then calculate the compensation $\mbf c$ with CSSC(Eq.~(\ref{eqn:SSC})); thus yielding smoothed output $\hat{\mbf o}$ (Eq.~(\ref{eqn:hatoutput})).
 For training, the CSSC is a standalone nonparametric module (since we have its analytical solution) on top of the ODE-RNN that allows the end-to-end training for ODE-RNN parameters.
 We employ Mean Squire Error (MSE) loss to supervise $\hat{\mbf o}$.
 In addition, we expect the compensation $\mbf c$ to be small in an effort to push ODE-RNN to the leading role for interpolation and take full advantage of its model capacity.
 Therefore a 2-norm penalty for $\mbf c$ is added to construct the final loss: 
 \begin{equation}
   \mcal L = \frac{1}{N}\sum_{i = 1}^N(||\mbf x(t_i) - \hat{\mbf o}(t_i)||^2 + \alpha ||\mbf c(t_i)||^2)
 \end{equation}
 The ablation study (Sec.~\ref{sec:abla}) shows that the balance weight $\alpha$ can effectively arrange the contribution of ODE-RNN and CSSC.
 
 \noindent\textbf{Gradient flow}. Although $\mbf c$ is non-parametric module, but the gradient can flow from it into the ODE-RNN because $\mbf c(t)$ depends on the left and right limit of $\mbf o(t_k), \dot{\mbf o}(t_k), \ddot{ \mbf o}(t_k)$.
 We further analyze that $\dot{\mbf o}(t_k)$ plays a more important role than $\ddot{ \mbf o}(t_k)$ in the contribution to $\mbf c(t)$, elaborated in Appx.~\ref{app:reduce_analytic_deriv}.

 \subsection{Interpolation Error Bound}
 \label{sec:err_bound}
 With CSSC, we can even derive an interpolation error bound, which is hard to obtain for ODE-RNN.
 Without loss of generality, we analyze one dimension of $\hat{\mbf{o}}$, which is scalable to all dimensions.
 \begin{theorem}
   \label{thm:err_bnd}
   Given the CSSC at the output space as Eq.~(\ref{eqn:hatoutput}), if $x\in C^4[a,b]$, $f\in C^3$, $g\in C^4$, then the error and the first order error are bounded as
   \begin{equation}
    \label{eqn:err_bnd}
     ||(x-\hat o)^{(r)}||_\infty \leq C_r||(x - o)^{(4)}||_\infty\tau^{4-r},\quad  (r=0, 1),
   \end{equation}
   where $||\cdot||_\infty$ is uniform norm, $(\cdot)^{r}$ is $r$-th derivative, $C_0=\frac{5}{384}$, $C_1=\frac{1}{24}$, $\tau$ is the maximum interval over $\Pi$.
 \end{theorem}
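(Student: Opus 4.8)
The plan is to reduce the statement to the classical sharp error analysis for cubic spline interpolation, applied to the single error function $\phi := x - \hat o$ and its companion $\psi := x - o$. First I would fix the regularity of the ODE-RNN output on each subinterval $[t_k,t_{k+1})$: since $\mbf h$ solves the autonomous equation $\dot{\mbf h} = f(\mbf h)$ with $f\in C^3$, a standard bootstrap ($\mbf h\in C^1 \Rightarrow \dot{\mbf h}=f(\mbf h)\in C^1 \Rightarrow \mbf h\in C^2 \Rightarrow \cdots$) gives $\mbf h\in C^4$, whence $o=g(\mbf h)$ with $g\in C^4$ yields $o\in C^4$. Thus $\psi=x-o\in C^4$ on each open subinterval and $\psi^{(4)}=(x-o)^{(4)}$ is well defined, even though $o$ (and hence $\psi$) jumps across nodes.

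The crux is to show that $\phi$ carries exactly the structural fingerprint of a natural cubic spline interpolation error, the only difference being that its fourth derivative is $(x-o)^{(4)}$ rather than $x^{(4)}$. Constraint~1 gives $\phi(t_k)=0$ at every node; constraints~2 and~3 together with $x\in C^4\subset C^2$ give $\phi\in C^2[a,b]$; constraint~4 supplies the natural end data $\ddot{\hat o}(t_0)=\ddot{\hat o}(t_n)=0$. The decisive identity is
\begin{equation}
  \phi^{(4)} = (x-o)^{(4)} - c^{(4)} = (x-o)^{(4)}
\end{equation}
on each open subinterval, because $c$ is piecewise cubic so $c^{(4)}\equiv 0$ there; this is exactly where the cubic (rather than higher-degree) choice of compensation enters.

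With these properties in hand I would transport the classical estimates, since their proofs bound $\|e^{(r)}\|_\infty$ purely through $\|e^{(4)}\|_\infty$ and the mesh, never through any feature of the interpolated function beyond its fourth derivative. Concretely, the nodal second derivatives of $\phi$ satisfy a strictly diagonally dominant tridiagonal system of the same form as the matrix $A$ in Theorem~\ref{thm:interp_solution} (where $\mu_k+\lambda_k=1<2$), whose right-hand side is an averaged second difference of $\psi$ plus the jump corrections $\dot r_k,\ddot r_k$; a maximum-principle estimate for this system gives $\|\ddot\phi\|_\infty \le C\,\|(x-o)^{(4)}\|_\infty\,\tau^2$. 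Representing $\phi$ on each subinterval by the Green's function of the operator $d^4/dt^4$ with data $\phi=0$ and $\ddot\phi$ prescribed at the two endpoints — and using Rolle's theorem on the nodal zeros of $\phi$ to control $\dot\phi$ — yields the two bounds with the claimed sharp constants $C_0=5/384$ and $C_1=1/24$.

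I expect the sharp constants, not the powers of $\tau$, to be the main obstacle: crude propagation (Rolle plus two integrations from the nodes where $\phi=0$) already gives $\|\phi\|_\infty=O(\tau^4)$ and $\|\dot\phi\|_\infty=O(\tau^3)$, but isolating $5/384$ and $1/24$ requires a tight kernel estimate on the reference interval together with a correspondingly tight bound on the tridiagonal solve. A secondary delicate point is the boundary regime, since natural end conditions generically degrade the order near the endpoints unless the second derivative vanishes there; the estimate therefore either inherits its constant from the complete-spline analysis referenced in the text or must treat the two boundary intervals separately.
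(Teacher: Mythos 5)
Your plan is sound and belongs to the same family of argument as the paper's proof: both rest on the observation that, because $c$ is piecewise cubic ($c^{(4)}\equiv 0$ on each open subinterval) and constraints 1--4 force $\phi = x-\hat o$ to vanish at every node, lie in $C^2[a,b]$, and carry natural-type end data, the CSSC error is \emph{exactly} a natural-cubic-spline interpolation error with $x^{(4)}$ replaced by $(x-o)^{(4)}$. The difference is in execution. For regularity, the paper (Lemma~\ref{lem:continue}) expands $o^{(r)}$, $r\le 4$, explicitly in Jacobians and Hessians of $f$ and $g$; your bootstrap ($\mbf h\in C^1\Rightarrow f(\mbf h)\in C^1\Rightarrow \mbf h\in C^2\Rightarrow\cdots$) reaches the same conclusion more cleanly. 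For the estimate itself, the paper never re-derives a tridiagonal system: it decomposes $x-\hat o = (x-o-u) + (u-c)$, where $u$ is the piecewise cubic Hermite interpolant of $x-o$, bounds the first term by the Birkhoff--Priver Hermite bound (Lemma~\ref{lem:hermite}), and writes $u-c = \dot e(t_k)H_1 + \dot e(t_{k+1})H_2$ in the Hermite basis, with the nodal derivatives $\dot e(t_k)$ controlled by the Hall--Meyer nodal lemma (Lemma~\ref{lem:e_bnd}, $\rho_1 = \tfrac{1}{24}$); the constants then combine as $\tfrac{1}{384} + \tfrac{1}{96} = \tfrac{5}{384}$. Your route --- a maximum-principle bound on the nodal $\ddot\phi$ from the tridiagonal system, followed by a per-interval Green's-function representation --- is essentially an inlining of what those two cited lemmas do internally: it buys self-containedness, at the price of redoing the sharp-constant bookkeeping that the paper imports by citation.

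Your closing caveat about the boundary is not a side remark; it identifies a hypothesis the paper's proof uses but never verifies. Lemma~\ref{lem:e_bnd} requires $e''(a)=e''(b)=0$ for $e=x-\hat o$, and since constraint 4 forces $\ddot{\hat o}(t_0)=\ddot{\hat o}(t_n)=0$, this amounts to assuming $\ddot x(a)=\ddot x(b)=0$, which appears nowhere in Theorem~\ref{thm:err_bnd}. Without it the claimed uniform $O(\tau^4)$ rate fails in general: in the paper's own reduction $o\equiv 0$ (take $g\equiv 0$, which is $C^4$), $\hat o$ is the natural cubic spline of the data, and for $x(t)=t^2$ one has $\|x^{(4)}\|_\infty = 0$ yet $\hat o\neq x$ because $\ddot{\hat o}(t_0)=0\neq \ddot x(t_0)$. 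So your suggestion that the two boundary intervals be treated separately, or the end conditions strengthened to complete/clamped type, is exactly what a fully correct proof needs --- the paper's argument glosses over this. A further minor discrepancy you would want to resolve: the paper's proof concludes $C_1\le \tfrac{1}{12}$, while the theorem statement claims $C_1=\tfrac{1}{24}$.
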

 The proof of Theorem~\ref{thm:err_bnd} is in Appx.~\ref{app:proof_err_bnd}.
 The error bound guarantee the error can converge to zero if $\tau\rightarrow 0$ or $||(x-o)^{(4)}||\rightarrow 0$. This suggests that a better interpolation can come from a denser observation or a better ODE-RNN output. 
 Interestingly, Eq.~(\ref{eqn:err_bnd}) can reduce to the error bound for cubic spline interpolation~\citep{hall1976optimal} if $o$ is set zero.
 Compared with ODE-RNN, which lacks the convergence guarantee for $\tau$, our model more effectively mitigates the error for the densely sampled curve at complexity $O(\tau^4)$ 
 ; compared with the cubic spline interpolation, our error bound has an adjustable $o$ that can leads smaller $||(x-o)^{(4)}||$ than $||x^{(4)}||$.

 An implicit assumption for this error bound is that the $x$ should be 4-th order derivable; hence this model is not suitable for sharply changing signals.

 \subsection{Extend to interpolate hidden state}
 \label{sec:hidden_cssc}
 Although CSSC has theoretical advantages from the error bound, it is still confronted with two challenges.
 Consider the example of video frames: each frame is high-dimensional data, and each pixel is not continuous through the time, but the spatial movement of the content (semantic manifold) is continuous.
 So the first challenge is that CSSC has linear complexity w.r.t. data dimension, which is still computation demanding when the data dimension becomes very high.
 The second challenge is that CSSC assumes the underlying function $\mbf x$ is continuous, and it cannot handle the discontinuous data that is continuous in its semantic manifold (e.g. video).

 To further tackle these two challenge, we propose a variant of CSSC that is applied to the hidden states, named as \emph{hidden CSSC}, illustrated in bottom of Fig.~\ref{fig:teasing}.
 As we only compute the compensation to hidden state, which can keep a fixed dimension regardless the how large the data dimension is, the computational complexity of hidden CSSC is weakly related to data dimension. 
 Also, the hidden state typically encodes a meaningful low-dimensional manifold of the data.
 Hence smoothing the hidden state is equivalent to smoothing the semantic manifold to a certain degree.

 Hence, the Eq.~(\ref{eqn:hatoutput}) is modified to $\hat{\mbf h}(t) = \mbf{c}(t) + \mbf h(t)$, and the output (Eq.~(\ref{eqn:ode_output_func}),(\ref{eqn:interp_point_output_func})) to $\mbf o(t) = g(\hat{\mbf h}(t))$
 However, since there is no groundtruth for the hidden state like the constraint 1, we assume the $\mbf h(t_k^+)$ are the knots passed through by the compensated curve, rendering the constraints 1 as $\hat{\mbf h}(t_k^-)=\hat{\mbf h}(t_k^+)=\mbf h(t_k^+)$.
 The rationality for the knots is that $\mbf h(t_k^+)$ is updated by $\mbf x(t_k)$ and thus contain more information than $\mbf h(t_k^-)$.
 Given the above redefined variable, he closed-form solution of $\mbf c$ is provided in Theorem.~\ref{thm:interp_hidden_solution}.
 \begin{theorem}
   \label{thm:interp_hidden_solution}
   Given the second order jump at hidden state as 
   \begin{align}
     \dot {\mbf r}_k = \dot {\mbf h}(t_k^+) - \dot{\mbf h}(t_k^-); \quad \quad \quad
     \ddot{\mbf r}_k = \ddot{\mbf{h}}(t_k^+) - \ddot{\mbf h}(t_k^-),
   \end{align}
   where
   $\dot{\mbf h} = f$, 
   $\ddot{\mbf h} = \frac{\partial f}{\partial \mbf h}f$,
   and the error defined as
   \begin{align}
     \bm{\epsilon}_k^+ = \mbf h(t_k^+) - \mbf h(t_k^+) = \mbf 0; \quad \quad \quad
     \bm{\epsilon}_k^- = \mbf h(t_k^+) - \mbf h(t_k^-),
   \end{align}
   the $c_k$ is uniquely determined as Eq.~(\ref{eqn:SSC}).
 \end{theorem}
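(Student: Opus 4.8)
The plan is to prove Theorem~\ref{thm:interp_hidden_solution} by reduction: I would show that interpolating the hidden state is a literal instance of the output-space problem already solved in Theorem~\ref{thm:interp_solution}, obtained through a consistent relabeling of the underlying quantities. The derivation behind Theorem~\ref{thm:interp_solution} never uses the specific facts that the base curve is $o=g(\mbf h)$ or that the knot targets are the observations $x_k$; it uses only (i) the four smoothness constraints imposed on $\hat o = c + o$, (ii) the one-sided derivative jumps $\dot r_k, \ddot r_k$ of the base curve at each $t_k$, and (iii) the right/left residuals $\epsilon_k^+, \epsilon_k^-$ between the knot targets and the base curve. Since each of these objects has a well-defined analogue in the hidden-state setting, the identical algebra must produce the identical closed form.

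First I would fix one coordinate of $\mbf h$ (the compensation is built per-coordinate, exactly as in the output case) and write the compensated trajectory as $\hat h = c + h$ with $c=\sum_k c_k$ piecewise cubic. I would then transcribe the four constraints onto $\hat h$: the interpolation condition $\hat h(t_k^-)=\hat h(t_k^+)=h(t_k^+)$ (constraint~1, now with the knot value $h(t_k^+)$ in place of $x_k$), first- and second-order continuity at interior knots (constraints~2--3), and the natural boundary condition $\ddot{\hat h}(t_0)=\ddot{\hat h}(t_n)=0$ (constraint~4, yielding $M_0=M_n=0$). This is verbatim the constraint set of Theorem~\ref{thm:interp_solution} under the substitution $x_k \mapsto h(t_k^+)$ and $o \mapsto h$.

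Next I would verify that the auxiliary quantities match. From the ODE $\dot{\mbf h} = f(\mbf h)$ the first derivative of the base curve is $\dot{\mbf h}=f$, and differentiating once more along the flow gives $\ddot{\mbf h} = \frac{\partial f}{\partial \mbf h} f$ by the chain rule; these are precisely the expressions replacing $\dot o, \ddot o$, and they feed into $\dot{\mbf r}_k = \dot{\mbf h}(t_k^+)-\dot{\mbf h}(t_k^-)$ and $\ddot{\mbf r}_k = \ddot{\mbf h}(t_k^+)-\ddot{\mbf h}(t_k^-)$ exactly as $\dot r_k, \ddot r_k$ do in the output case. For the residuals, because the knot target is taken to be the right limit $\mbf h(t_k^+)$ itself, the right residual collapses to $\bm{\epsilon}_k^+ = \mbf h(t_k^+)-\mbf h(t_k^+) = \mbf{0}$, while the left residual is the RNN-induced jump $\bm{\epsilon}_k^- = \mbf h(t_k^+)-\mbf h(t_k^-)$. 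With these identifications the intermediate data $\tau_k, \mu_k, \lambda_k, d_k$ and the tridiagonal system $A\mbf M=\mbf d$ are assembled by the identical formulas, so solving for the $M_k$ and substituting back reproduces Eq.~(\ref{eqn:SSC}) unchanged.

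The only point needing care — and the closest thing to an obstacle — is confirming that the reduction is faithful exactly where the two settings formally differ: the base curve is now smoothed directly rather than through the readout $g$, so I must check that the relevant derivative jumps are those of $\mbf h$ itself (via $\dot{\mbf h}=f$ and $\ddot{\mbf h}=\frac{\partial f}{\partial \mbf h}f$) and not of $g(\mbf h)$, and that forcing $\bm{\epsilon}_k^+ = \mbf{0}$ does not degenerate the system. The latter is immediate: $\epsilon_k^+$ enters Eq.~(\ref{eqn:SSC}) and $d_k$ only additively (through $\epsilon[t_k^+, t_{k+1}^-]=(\epsilon_{k+1}^- - \epsilon_k^+)/\tau_k$), so setting it to zero merely specializes the formula rather than breaking its derivation, and the coefficient matrix $A$ is untouched, remaining strictly diagonally dominant (its off-diagonal entries satisfy $\mu_k+\lambda_k=1<2$) and hence invertible. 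This confirms that $c_k$ is uniquely determined by the same expression Eq.~(\ref{eqn:SSC}).
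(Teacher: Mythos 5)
Your reduction is exactly how the paper establishes Theorem~\ref{thm:interp_hidden_solution}: the paper gives no separate proof, instead presenting the result as an instantiation of Theorem~\ref{thm:interp_solution} under the substitutions $x_k \mapsto \mbf h(t_k^+)$, $o \mapsto \mbf h$ (so $\dot{\mbf h}=f$, $\ddot{\mbf h}=\frac{\partial f}{\partial \mbf h}f$ replace Eq.~(\ref{eqn:dot_o})), with $\bm{\epsilon}_k^+=\mbf 0$, which is precisely your relabeling argument. Your additional checks — that the tridiagonal system is unaffected by forcing $\bm{\epsilon}_k^+=\mbf 0$ and that $A$ stays strictly diagonally dominant — are correct and, if anything, make the implicit step in the paper explicit.
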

 Theorem~\ref{thm:interp_hidden_solution} suggests another prominent advantage: hidden CSSC  can be more efficiently implemented because its computation does not involve Hessian matrix.

 \begin{figure}[t] 
   \centering\includegraphics[width=1\columnwidth]{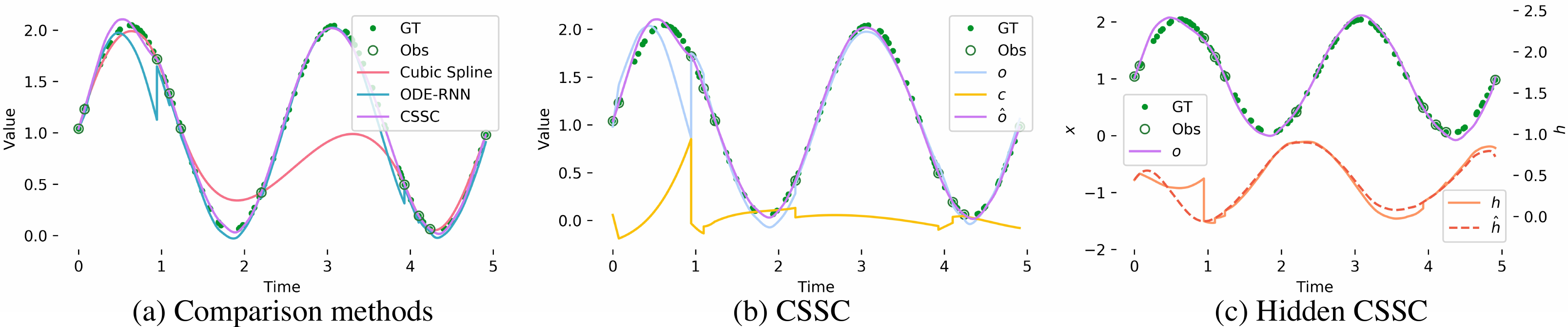} 
   \caption{The visual result for Sinuous Wave dataset with 10 observations. (a) shows the comparison of CSSC against other methods; (b) demonstrates the effect of $c$ to smooth the $o$; (c) shows the hidden CSSC output is smooth because its unsmooth hidden state $h$ is smoothed into $\hat h$.}
   \label{fig:toy_vis_res}
 \end{figure} 

 \section{Experiments}
 
 \subsection{Baseline}
 We compare our method with baselines: (1) Cubic Spline Interpolation (Cubic Spline) (2) A classic RNN where $\Delta t$ is concatenated to the input (RNN-$\Delta t$) (3) GRU-D~\citep{che2018recurrent} (4) ODE-RNN~\citep{rubanova2019latent} which is what our compensation adds upon (5) Latent-ODE~\citep{rubanova2019latent} which is a VAE structure employing ODE-RNN as the encoder and ODE as decoder (6) GRU-ODE-Bayes~\citep{de2019gru} which extends the ODE as a continuous GRU. 
 Our proposed methods are denoted as CSSC for data space compensation and hidden CSSC for hidden space compensation. Implementation Details is in Appx.~\ref{app:implement_detail}.

 \subsection{Toy Sinuous Wave Dataset}
 \label{sec:toy_exp}
 The toy dataset is composed of 1,000 periodic trajectories with variant frequency and amplitude. 
 Following the setting of~\citet{rubanova2019latent},  each trajectory contains 100 irregularly-sampled time points with the initial point sampled from a standard Gaussian distribution.
 Fixed percentages of observations are randomly selected with the first and last time points included. 
 The goal is to interpolate the full set of 100 points.
 
 The interpolation error on testing data is shown in Table~\ref{tab:comp_exp}, where our method outperforms all baselines in different observation percentages.
 Figure~\ref{fig:toy_vis_res}~(a) illustrates the benefit of the CSSC over cubic spline interpolation and ODE-RNN.
 Cubic spline interpolation cannot interpolate the curve when observations are sparse, without learning from the dataset.
 ODE-RNN performs poorly when a jump occurs at the observation.
 However, CSSC can help eliminate such jump and guarantee smoothness at the observation time, thus yielding good interpolation.
 In Figure~\ref{fig:toy_vis_res}~(b), we visualized the ODE-RNN output $o$ and compensation $c$, and the CSSC output $\hat o$ in detail to demonstrate how the the bad $o$ becomes a good $\hat o$ by adding $c$.
 Finally, Fig.~\ref{fig:toy_vis_res}~(c) demonstrates the first dimension of hidden states before and after smoothing, where the smoothed hidden state leads to a smoothed output.
 
 \subsection{MuJoCo Physics Simulation}
 We test our proposed method with the "hopper" model provided by DeepMind Control Suite~\citep{tassa2018deepmind} based on MuJoCo physics engine.
 To increase the trajectory's complexity, the hopper is thrown up, then rotates and freely falls to the ground (Fig.~\ref{fig:hop_vis_res}).
 We will interpolate the 7-dimensional state that describes the position of the hopper.
 Both our hidden CSSC and CSSC achieve improved performance from ODE-RNN, especially when observations become sparser, as Tab.~\ref{tab:comp_exp} indicates.
 The visual result is shown in Fig.~\ref{fig:hop_vis_res}, where CSSC resemble the GT trajectory most. 
 
 \newcommand{\as}[1]{\renewcommand{\arraystretch}{#1}}
 \begin{table*}[t!]\centering
 \caption{Interpolation MSE on toy, MuJoCo, Moving MNIST test sets with different percentages of observations.}
 \as{1.2}
 \scalebox{0.79}{
 \begin{tabular}{@{}lrrrcrrrcrr@{}}
 \toprule
 & \multicolumn{3}{c}{Toy} & \phantom{a}& \multicolumn{3}{c}{MuJoCo}& \phantom{a}& \multicolumn{2}{c}{Moving MNist}\\
 \cmidrule{2-4} \cmidrule{6-8} \cmidrule{10-11}
 Observation & 10\% & 30\% & 50\% && 10\% & 30\% & 50\% && 20\% & 30\%\\ 
 \midrule
 Cubic Spline & 0.801249 & 0.003142 & 0.000428 && 0.016417 & 0.000813 & 0.000125 && 0.072738 & 0.252647 \\
 RNN-$\Delta t$ &  0.449091  & 0.245546 &   0.102043   && 0.028457  & 0.019682  & 0.008975 && 0.040392 & 0.037924 \\
 GRU-D &  0.473954 &  0.247522  &  0.121482  && 0.056064  & 0.018285 & 0.008968 && 0.037503 & 0.035681\\
 Latent-ODE & 0.013768 & 0.002282 & 0.002031 && 0.010246 & 0.009601 & 0.009032 && 0.035167 & 0.031287\\
 GRU-ODE-Bayes & 0.117258  & 0.010716 & 0.000924 && 0.028457  & 0.006782 & 0.002352 && 0.034093 & 0.030975\\
 ODE-RNN & \textbf{0.025336}  & \textbf{0.001429}  & 0.000441  && 0.011321  & 0.001572  & 0.000388 && 0.022141 & 0.016998\\
 hidden CSSC & 0.027073  & 0.001503 & \textbf{0.000244} && \textbf{0.004554}  & \textbf{0.000378}  & \textbf{0.000106} && \textbf{0.019475} & \textbf{0.015662}\\
 CSSC & \textbf{0.024656} & \textbf{0.000457} & \textbf{0.000128} && \textbf{0.006097}  & \textbf{0.000375} & \textbf{0.000087} && - & -\\
 \bottomrule
 \end{tabular}}
 \label{tab:comp_exp}
 \end{table*}
 
 \begin{figure}[t] 
   \centering\includegraphics[width=1\columnwidth]{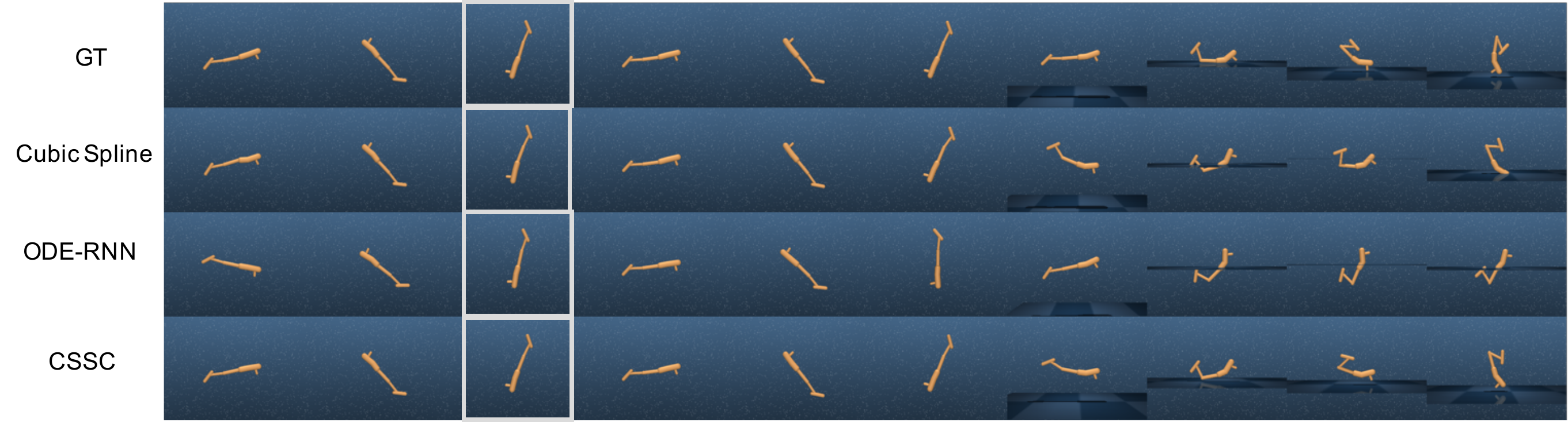} 
   \caption{The visual result for MuJoCo. We visualize part of the interpreted trajectory with 5 frames interval. 10\% frames are observed out of 100 frames. The observation is highlighted with white box.}
   \label{fig:hop_vis_res}
 \end{figure}

 \subsection{Moving MNIST}
 In addition to low-dimensional data, we further evaluate our method on high-dimensional image interpolation.
 Moving MNIST consists of 20-frame video sequences where two handwritten digits are drawn from MNIST and move with arbitrary velocity and direction within the 64$\times$64 patches, with potential overlapping and bounce at the boundaries.
 As a matter of expediency, we use a subset of 10k videos and resize the frames into 32$\times$32.
 4 (20\%) and 6 (30\%) frames out of 20 are randomly observed, including the starting and ending frames.
 We encode the image with 2 ResBlock~\citep{he2016deep} into 32-d hidden vector and decode it to pixel space with a stack of transpose convolution layers.
 Since the pixels are only continuous at the semantic level, only hidden CSSC is evaluated with comparison methods. As shown in Tab.~\ref{tab:comp_exp}, the hidden CSSC can further improve ODE-RNN's result, and spline interpolation behaves the worse since it can only interpolate at the pixel space, which is discontinuous through time. The visual result (Fig.~\ref{fig:mnist_vis_res}) shows that the performance gain comes from the smoother movement and the clearer overlapping. 
  
 \begin{figure}[t] 
   \centering\includegraphics[width=\columnwidth]{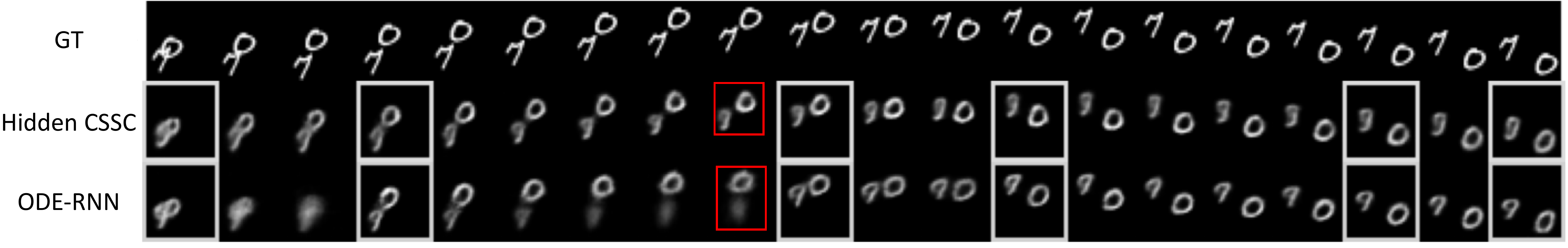} 
   \caption{The visual result for Moving MNIST. The observation is indicated in white box. The comparison of discontinuity is highlighted in red box.}
   \label{fig:mnist_vis_res}
 \end{figure} 
 
 \subsection{Ablation Study}
 \label{sec:abla}

 \begin{table}[t]\centering
 \caption{The MSE for on MuJoCo test set for the study of different training strategies.}
 \as{1.2}
 \begin{tabular}{@{}lrrr@{}}
 \toprule
  & 10\% & 30\% & 50\% \\
 \midrule
 ODE-RNN & 0.011321 & 0.001572 & 0.000388 \\
 pre-hoc CSSC & 0.053217 & 0.006131 & 0.002062 \\
 post-hoc CSSC & 0.013574 &  0.000514 & 0.000110 \\
 CSSC & \textbf{0.006097} & \textbf{0.000375} & \textbf{0.000087} \\
 \bottomrule
 \end{tabular}
 \label{tab:post_hoc}
 \end{table}

 \noindent\textbf{The effect of end-to-end training}.\quad
 Apart from our standard end-to-end training of CSSC, two alternative training strategies are pre-hoc and post-hoc CSSN. Pre-hoc CSSC is to train a standard CSSC but only use the ODE-RNN part when inference. On the contrary, post-hoc CSSC is to train an ODE-RNN without CSSC, but apply CSSC upon the output of ODE-RNN when inference.
 The comparison of pre-hoc, post-hoc, and standard CSSC is presented in Tab.~\ref{tab:post_hoc}, where standard CSSC behaves the best.
 The pre-hoc CSSC is the worst because training with CSSC can tolerate the error of the ODE-RNN; thus, inference without CSSC exposes the error of ODE-RNN, and it even performs worse than standard ODE-RNN.
 The post-hoc CSSC can increase the performance of ODE-RNN with simple post-processing when inference. However, such performance gain is not guaranteed when the observation is sparse.
 For example, the post-hoc CSSC even decreases the performance of ODE-RNN in 10\% observation setting. While standard CSSC can always increase the performance upon ODE-RNN, indicating the importance of end-to-end training.
 
 \noindent\textbf{The effect of $\alpha$}.\quad
 We study the effect of $\alpha$ ranging from 0 to 10000 given different percentages of observation on MuJoCo dataset. 
 The performance of CSSC is quite robust to the choice of $\alpha$, shown in Tab.~\ref{tab:abla_alpha} (in Appendix), especially the MSE only fluctuated from 0.000375 to 0.000463 as $\alpha$ ranges from 1 to 10000 in 30\% observation setting.
 Interestingly, Fig.~\ref{fig:abla_alpah} (in Appendix) visually compares the interpolation for $o$, $c$, and $\hat o$ under variant $\alpha$ and indicates higher $\alpha$ contributes to lower $c$, thus $o$ is more dominant of smoothed output $\hat o$.
 On the other hand, the smaller $\alpha$ will makes $o$ less correlated with the ground truth, but the CSSC $c$ can always make $\hat o$ a well-interpolated curve.
 
\section{Discussion}
\textbf{Limitations}. While the CSSC model interpolates the trajectory that can strictly cross the observation points, such interpolation is not suitable for noisy data whose observations are inaccurate.
Moreover, interpolation error bound (Eq.~(\ref{eqn:err_bnd})) requires the underlying data is fourth-order continuous, which indicates that CSSC is not suitable to interpolate sharply changed data, e.g., step signals.

\textbf{Future work}. The CSSC can not only be applied to ODE-RNN but can smooth any piecewise continuous function. 
Applying CSSC to other more general models is a desirable future work.
Also, while interpolation for noisy data is beyond this paper's scope, but hidden CSSC shows the potential to tolerate data noise by capturing the data continuity at the semantic space rather than the observation space, which can be a future direction.

\section{Conclusion}
We introduce the CSSC that can address the discontinuity issue for ODE-RNN.
We have derived the analytical solution for the CSSC and even proved its error bound for the interpolation task, which is hard to obtain in pure neural network models.
The CSSC combines the modeling ability of deep neural networks (ODE-RNN) and the smoothness advantage of cubic spline interpolation. Our experiments have shown the benefit of such combination.
The hidden CSSC extends the smoothness from output space to the hidden semantic space, enabling a more general format of continuous signals.

\clearpage

\bibliography{iclr2021_conference}
\bibliographystyle{iclr2021_conference}

\appendix
\section{Proof of interpolation}
\label{app:proof_interp}
Substitute $\hat o$ with Eq.~(\ref{eqn:hatoutput}) we will have
\begin{align}
  c_k(t_k^+) + o(t_k^+) &= x_k \label{eqn:start_obs} \\
  c_k(t_{k+1}^-) + o(t_{k+1}^-) &=x_{k+1} \label{eqn:end_obs}\\
  \dot c_{k-1}(t_k^-) + \dot o(t_k^-) &= \dot c_k(t_k^+) + \dot o(t_k^+) \label{eqn:1nd_obs_constraint} \\
  \ddot c_{k-1}(t_k^-) + \ddot o(t_k^-) &= \ddot c_k(t_k^+) + \ddot o(t_k^+) \label{eqn:2nd_obs_constraint} \\
  \ddot c_0(t_0^+) + \ddot o(t_0^+) &= 0 \\
  \ddot c_{n-1}(t_n^-) + \ddot o(t_n^-) &= 0
\end{align}
And we let $\ddot c_k(t_k^+)= M_k, k=0, 1, ..., n-1$, and $\ddot c_{n-1}(t_n^-)=M_n$.
We define the first order and second order jump difference of ODE-RNN as 
\begin{align}
  \dot r_k &= \dot o(t_k^+) - \dot o(t_k^-);\\
  \ddot r_k &= \ddot o(t_k^+) - \ddot o(t_k^-).
\end{align}
With Eq.~({\ref{eqn:2nd_obs_constraint}}), we have 
\begin{equation}
\hat M_{k+1} = \ddot c_k(t_{k+1}^-) = M_{k+1} + \ddot r(t_{k+1}).
\end{equation}
Using constraint Eq.~(\ref{eqn:start_obs})(\ref{eqn:end_obs}), we denote
\begin{align}
  c_k(t_k) &= x_k - o(t_k^+) = \epsilon_k^+\\
  c_k(t_{k+1}) &= x_{k+1} - o(t_{t+1}^-) = \epsilon_{k+1}^-.
\end{align}
Also denote the step size $\tau_k = t_{k+1} - t_k$.
Then applying constraint Eq.~(\ref{eqn:start_obs})(\ref{eqn:end_obs})({\ref{eqn:2nd_obs_constraint}}) we have the piece cubic function expressed as 
\begin{equation}
  c_k(t) = \frac{\hat M_{k+1} - M_k}{6\tau_k}(t - t_k)^3 + \frac{M_k}{2} (t - t_k)^2 + (\frac{\epsilon_{k+1}^- - \epsilon_k^+}{\tau_k} - \frac{\tau_k(\hat M_{k+1} + 2M_k)}{6})(t - t_k) + \epsilon_k^+.
\end{equation}

Next, we try to solve all $M_k$.  We firstly express $\dot c_{k-1}(t_k^-)$ and $\dot c_k(t_k^+)$ as
\begin{align}
  \dot c_{k-1}(t_k^-) &= \frac{\hat M_k - M_{k-1}}{2}\tau{k-1} + M_{k-1}\tau_{k-1} + \frac{\epsilon_k^- - \epsilon_{k-1}^+}{\tau_{k-1}} - \frac{\hat M_k + 2M_{k-1}}{6}\tau_{k-1},\\
  \dot c_k(t_k^+) &= \frac{\epsilon_{k+1}^- - \epsilon_k^+}{\tau_k} - \frac{\hat M_{k+1} + 2M_k}{6}\tau_k
\end{align}
Applying Eq.~(\ref{eqn:1nd_obs_constraint}) we have 
\begin{equation}
  2M_k + \frac{\tau_{k-1}}{\tau_{k-1} + \tau_k}M_{k-1} + \frac{\tau_k}{\tau_{k-1} + \tau_k}M_{k+1} = 6\frac{\epsilon[t_k^+, t_{k+1}^-] - \epsilon[t_{k-1}^+, t_k^-]}{\tau_{k-1} + \tau_k} + \frac{6\dot r_k - 2\ddot r_k\tau_{k-1} - \ddot r_{k+1}\tau_k}{\tau_{k-1} + \tau_k}
\end{equation}
where $\epsilon[t_k^+, t_{k+1}^-] = \frac{\epsilon_{k+1}^- - \epsilon_k^+}{\tau_k}$.
\begin{align}
  \mu_k &= \frac{\tau_{k-1}}{\tau_{k-1} + \tau_k}\\
  \lambda_k &= \frac{\tau_k}{\tau_{k-1} + \tau_k}\\
  d_k &= 6\frac{\epsilon[t_k^+, t_{k+1}^-] - \epsilon[t_{k-1}^+, t_k^-]}{\tau_{k-1} + \tau_k} + \frac{6\dot r_k - 2\ddot r_k\tau_{k-1} - \ddot r_{k+1}\tau_k}{\tau_{k-1} + \tau_k}.
\end{align}
Then $M_k$ can be obtained by solving by system of linear equations:
\begin{equation}
  A\mbf M = \mbf d,
\end{equation}
where 
\begin{equation}
  A = 
  \begin{pmatrix}
    2 & \lambda_1 & & & \\
    \mu_2 & 2 & \lambda_2 & & \\
    &\ddots &\ddots & \ddots & \\
    & & \mu_{n-2} & 2 & \lambda_{n-2} \\
    & & & \mu_{n-1} & 2
  \end{pmatrix}, 
  \mbf{M} = \begin{pmatrix}
    M_1\\
    M_2\\
    \vdots\\
    M_{n-2}\\
    M_{n-1}
  \end{pmatrix},
  \mbf{d} = \begin{pmatrix}
    d_1\\
    d_2\\
    \vdots \\
    d_{n-2}\\
    d_{n-1}
  \end{pmatrix}
\end{equation}
And $A$ is non-singular since it is strict diagonally dominant matrix, hence guarantee single solution for $M_k$.
Hence 
\begin{equation}
\mbf M=A^{-1}\mbf d.
\end{equation}
Now we still need to calculate $\dot o(t)$ and $\ddot o(t)$.
\begin{equation}
  \dot o(t) = \frac{\partial g}{\partial \mbf h}^\intercal\frac{\partial \mbf h}{\partial t} = \frac{\partial g}{\partial \mbf h}^\intercal f
\end{equation}
\begin{align}
  \ddot{\mbf h}(t) &= \frac{df(\mbf h(t))}{dt}=\frac{\partial f}{\partial \mbf h}^\intercal\frac{d \mbf h}{dt}=\frac{\partial f}{\partial \mbf h}^\intercal f,\\
  \ddot o(t) &= \frac{d\frac{\partial g}{\partial \mbf h}}{dt}^\intercal \dot{\mbf h} + \frac{\partial g}{\partial h}^\intercal\ddot{\mbf h} = (\frac{\partial^2 g}{\partial \mbf h^2}f)^\intercal f + \frac{\partial g}{\partial \mbf h}^\intercal\frac{\partial f}{\partial \mbf h}f
\end{align}

\section{Proof for Error Bound}

The following proof are based on the default notation and setting for scalar time series interpolation: given a set of $n+1$ points $\{x(t_i)\}_{i=0}^n$ irregularly spaced at time points $\Pi: a=t_0<t_1<...<t_n=b$, the goal is to approximate the underlying ground truth function $x(t)$, $t\in \Omega=[a,b]$.
Let $o(t)$ as the ODE-RNN prediction, and $\hat o(t)=c(t) + o(t)$ as our smoothed output where $c(t)$ is the compensation defined by Eq.~(\ref{eqn:SSC}).
To simplify the notation, we drop the argument $t$ for a function, e.g. $x(t)\rightarrow x$.
We firstly introduce several lemmas, then come the the main proof for the interpolation error bound.

\label{app:proof_err_bnd}
\begin{lemma} \citep{hall1976optimal}
  \label{lem:e_bnd}
  Let $e$ be any function in $C^2(\Pi)\bigcap_{k=1}^{n}C^4(t_{k-1}, t_k)$ with constraints $x(t_k)=0$, $k=0, ..., n$,
  and natural boundary $e''(a)=e''(b)=0$, then
  \begin{equation}
    |e^{(r)}(t_k)|\leq \rho_r||e^{(4)}||_\infty \tau^{4-r} \quad\quad (k=0,...,n;\ r=1, 2), \label{eqn:bound_e}
  \end{equation}
  where $\rho_1=\frac{1}{24}$, $\rho_2=\frac{1}{4}$.
\end{lemma}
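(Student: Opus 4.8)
The plan is to reduce the $r=1$ bound to the $r=2$ bound, and to obtain the latter from a tridiagonal linear system for the knot values of $e''$. Write $M_k := e''(t_k)$; the natural boundary condition fixes $M_0 = M_n = 0$, and the driving data are the interpolation constraints $e(t_k)=0$ together with the global $C^2$ regularity. On each subinterval $I_k = [t_{k-1}, t_k]$ of length $h_k$ I would split $e = P_k + R_k$, where $P_k$ is the unique cubic with $P_k(t_{k-1}) = P_k(t_k) = 0$, $P_k''(t_{k-1}) = M_{k-1}$, $P_k''(t_k) = M_k$, so the remainder $R_k := e - P_k$ satisfies $R_k(t_{k-1}) = R_k(t_k) = 0$, $R_k''(t_{k-1}) = R_k''(t_k) = 0$ and $R_k^{(4)} = e^{(4)}$. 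The cubic part contributes the familiar one-sided derivatives $P_k'(t_k^-) = \tfrac{h_k}{6}M_{k-1} + \tfrac{h_k}{3}M_k$ and $P_{k+1}'(t_k^+) = -\tfrac{h_{k+1}}{3}M_k - \tfrac{h_{k+1}}{6}M_{k+1}$.

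Next I would impose continuity of $e'$ at each interior knot, $P_k'(t_k^-) + R_k'(t_k^-) = P_{k+1}'(t_k^+) + R_{k+1}'(t_k^+)$. Collecting the cubic terms and dividing by $(h_k+h_{k+1})/6$ yields a system $A\mbf{M} = \mbf{b}$ in which $A$ has the same diagonally dominant tridiagonal structure as in Theorem~\ref{thm:interp_solution} (diagonal $2$, subdiagonal $\mu_k=\tfrac{h_k}{h_k+h_{k+1}}$, superdiagonal $\lambda_k=\tfrac{h_{k+1}}{h_k+h_{k+1}}$, with $\mu_k+\lambda_k=1$) and $b_k = \tfrac{6}{h_k+h_{k+1}}\big(R_{k+1}'(t_k^+) - R_k'(t_k^-)\big)$. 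Because each off-diagonal row sum is at most $\mu_k+\lambda_k=1$ against a diagonal entry $2$, strict diagonal dominance gives $\|A^{-1}\|_\infty \le 1$, so $\max_k |M_k| \le \|\mbf{b}\|_\infty$.

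It then remains to estimate the one-sided remainder derivatives. I would represent $R_k$ via the Green's function of the simply supported beam problem $u^{(4)} = \phi$ on $I_k$ with $u = u'' = 0$ at both endpoints, whose endpoint-derivative kernel is single-signed; testing against the extremal $\phi \equiv 1$ gives $|R_k'(t_k^\pm)| \le \tfrac{h_k^3}{24}\|e^{(4)}\|_\infty$. Substituting into $b_k$ and using $\tfrac{h_k^3 + h_{k+1}^3}{h_k + h_{k+1}} = h_k^2 - h_k h_{k+1} + h_{k+1}^2 \le \tau^2$ yields $|b_k| \le \tfrac{\tau^2}{4}\|e^{(4)}\|_\infty$, hence the $r=2$ bound $|e''(t_k)| \le \rho_2 \|e^{(4)}\|_\infty \tau^2$ with $\rho_2 = \tfrac14$. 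For $r=1$ I would feed this back into $e'(t_k) = \tfrac{h_k}{6}M_{k-1} + \tfrac{h_k}{3}M_k + R_k'(t_k^-)$.

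The hard part is the sharp constant $\rho_1 = \tfrac1{24}$: a naive triangle inequality on $e'(t_k) = \tfrac{h_k}{6}M_{k-1} + \tfrac{h_k}{3}M_k + R_k'(t_k^-)$ loses a factor of four (it gives only $\tfrac16$), so the cancellation between the $M$-terms and the remainder must be tracked exactly, e.g. by assembling $e'(t_k)$ into a single Green's-function representation over the adjacent pair $I_k \cup I_{k+1}$ and showing the combined kernel is extremized by constant $e^{(4)}$. Verifying that the beam kernel is genuinely single-signed (so that the constant test function is extremal and the constants are optimal rather than mere upper bounds) is the other delicate point; by comparison the diagonal-dominance estimate and the per-interval algebra are routine.
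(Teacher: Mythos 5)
First, a point of reference: the paper does not prove this lemma at all --- it is imported wholesale from \citet{hall1976optimal} (note $x(t_k)=0$ in the statement is a typo for $e(t_k)=0$), so your attempt can only be measured against the classical argument. Your $r=2$ half is correct and is essentially that argument: the split $e = P_k + R_k$ with $P_k$ the cubic matching values and second derivatives, the resulting tridiagonal system $A\mbf{M}=\mbf{b}$ with $\|A^{-1}\|_\infty\le 1$ by strict diagonal dominance, and the estimate $|R_k'(t_k^\pm)|\le \tfrac{h_k^3}{24}\|e^{(4)}\|_\infty$ all check out. The single-signedness of the endpoint kernel, which you flag as delicate, is in fact easy in one dimension: the simply supported beam problem factors into two Dirichlet problems $v''=\phi$, $u''=v$, so its Green's function is an integral of a product of two sign-definite kernels, hence $G\ge 0$; since $G(t_k,\cdot)\equiv 0$, this forces $G_t(t_k,\cdot)$ to be one-signed, and testing with $\phi\equiv 1$ (solution $u=\tfrac{t}{24}(t-h)(t^2-ht-h^2)$) gives exactly $h_k^3/24$. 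Together with $\tfrac{h_k^3+h_{k+1}^3}{h_k+h_{k+1}}\le\tau^2$ this yields $\rho_2=\tfrac14$.

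For $r=1$, however, there is a genuine gap, which you yourself acknowledge: your route delivers only $\tfrac16$, and the repair you sketch --- a combined Green's-function representation of $e'(t_k)$ over the pair $I_k\cup I_{k+1}$ --- cannot work as stated, because $e'(t_k)=\tfrac{h_k}{6}M_{k-1}+\tfrac{h_k}{3}M_k+R_k'(t_k^-)$ depends on $e^{(4)}$ \emph{globally} through $\mbf{M}=A^{-1}\mbf{b}$; no functional supported on two adjacent intervals represents it. The classical fix is to dualize: run your own argument on the \emph{slope} system rather than the moment system. Set $m_k=e'(t_k)$ and on each $I_k$ write $e=Q_k+S_k$, where $Q_k$ is the cubic Hermite interpolant with zero endpoint values and slopes $m_{k-1},m_k$, so that $S_k=S_k'=0$ at both endpoints and $S_k^{(4)}=e^{(4)}$. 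Imposing continuity of $e''$ at interior knots and the natural conditions $e''(t_0)=e''(t_n)=0$ at the ends gives a tridiagonal system with diagonal $2$ and off-diagonal row sums at most $1$, hence again $\|A^{-1}\|_\infty\le 1$, but now the remainder is governed by the doubly clamped beam kernel: for $\phi\equiv 1$ the solution is $u=t^2(h-t)^2/24$, so $|S_k''(t_k^\pm)|\le\tfrac{h_k^2}{12}\|e^{(4)}\|_\infty$, and the interior right-hand sides are bounded by
\begin{equation*}
\frac{h_kh_{k+1}}{2(h_k+h_{k+1})}\Big(|S_{k+1}''(t_k^+)|+|S_k''(t_k^-)|\Big)\le \frac{h_kh_{k+1}(h_k^2+h_{k+1}^2)}{24(h_k+h_{k+1})}\|e^{(4)}\|_\infty\le\frac{\tau^3}{24}\|e^{(4)}\|_\infty,
\end{equation*}
with the boundary rows ($2m_0+m_1=\tfrac{h_1}{2}S_1''(t_0^+)$, and symmetrically at $t_n$) bounded by $\tfrac{h_1^3}{24}\|e^{(4)}\|_\infty$. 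This gives $\max_k|e'(t_k)|\le\tfrac{1}{24}\|e^{(4)}\|_\infty\tau^3$ in one step, with no cancellation to track; your moment system is then needed only for $\rho_2$. In short: keep your $r=2$ proof, discard the sketched local-kernel idea for $r=1$, and replace it with the dual Hermite/clamped-beam system.
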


\begin{lemma}
\label{lem:continue}
  Let $e=x-\hat o$, if $x\in C^4(\Omega)$, $f\in C^3$, $g\in C^4$, then $e\in C^2(\Pi)\bigcap_{k=1}^{n}C^4(t_{k-1}, t_k)$
\end{lemma}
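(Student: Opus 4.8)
The plan is to verify the two regularity assertions separately. The local claim—that $e$ is $C^4$ on each open piece $(t_{k-1},t_k)$—rests on the smoothness of the ODE-RNN output, while the global claim—that $e$ is $C^2$ across the knots—is essentially built into the definition of the compensation $c$.

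First I would show that $o=g(\mbf h)$ is $C^4$ on each open subinterval $(t_{k-1},t_k)$. Inside such an interval there is no RNN update, so $\mbf h$ is a genuine solution of the autonomous system $\dot{\mbf h}=f(\mbf h)$ of Eq.~(\ref{eqn:trans_func}). I would then run a bootstrapping argument on this ODE: continuity of $f(\mbf h)$ gives $\mbf h\in C^1$; substituting $\mbf h\in C^1$ into $f\in C^3$ shows $\dot{\mbf h}=f(\mbf h)\in C^1$, hence $\mbf h\in C^2$; two further iterations raise this to $\mbf h\in C^4$, which is exactly why the hypothesis $f\in C^3$ (one order below the target) suffices. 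Composing with $g\in C^4$ gives $o\in C^4(t_{k-1},t_k)$. Since the piece $c_k$ from Eq.~(\ref{eqn:SSC}) is a cubic polynomial and $x\in C^4(\Omega)$, the combination $e=x-c-o$ is $C^4$ on every open subinterval.

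Next I would establish the global $C^2$ continuity at the interior knots. This follows almost immediately from Theorem~\ref{thm:interp_solution}, whose construction of $c$ was designed precisely so that the first three constraints on $\hat o$ hold: continuity of $\hat o$, of $\dot{\hat o}$, and of $\ddot{\hat o}$ at each $t_k$. Hence $\hat o$ is twice continuously differentiable on all of $[a,b]$. As $x\in C^4(\Omega)\subseteq C^2(\Omega)$ is smooth across the knots, the difference $e=x-\hat o$ inherits continuity of value, first, and second derivatives at every $t_k$, so $e\in C^2(\Pi)$. Intersecting with the piecewise result gives $e\in C^2(\Pi)\bigcap_{k=1}^{n}C^4(t_{k-1},t_k)$, as claimed.

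I expect the bootstrapping step to be the only real obstacle, since the global continuity is a direct consequence of the spline constraints. The delicate point there is tracking the exact regularity: each time the ODE is differentiated it gains one order of smoothness, so $f\in C^3$ propagates to $\mbf h\in C^4$ and then, through $g\in C^4$, to the $C^4$ regularity of $o$ that Lemma~\ref{lem:e_bnd} ultimately needs in order to invoke the fourth derivative $(x-o)^{(4)}$.
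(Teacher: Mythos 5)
Your proof is correct, but it reaches the key local regularity claim by a genuinely different route than the paper. Where you bootstrap---$\mbf h$ continuous $\Rightarrow\dot{\mbf h}=f(\mbf h)$ continuous $\Rightarrow\mbf h\in C^1$, then iterate until $f\in C^3$ yields $\mbf h\in C^4$, and finally compose with $g\in C^4$---the paper instead computes the derivatives of $o$ explicitly by the chain rule: it writes out $\dot o$, $\ddot o$, $o^{(3)}$, and the full tensor expression for $o^{(4)}$ (Eq.~(\ref{eqn:4th-order-o})), observes that the highest-order derivatives occurring there are $\frac{\p^4 g}{\p \mbf h^4}$ and $\frac{\p^3 f}{\p\mbf h^3}$, and concludes that $g\in C^4$ and $f\in C^3$ suffice. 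The two arguments buy different things: your induction is shorter, avoids the multilinear-algebra bookkeeping (the paper must introduce an ad hoc $\circ$ product to even write the third- and fourth-order derivative tensors), and generalizes immediately to any order ($f\in C^{k-1}$, $g\in C^k$ gives $o\in C^k$ on each piece); the paper's explicit formulas, besides proving the lemma, make visible exactly which Jacobians and Hessians enter each derivative---information the authors reuse when discussing how to compute or approximate $\dot o$ and $\ddot o$ in practice (Appx.~\ref{app:numeric_derivative} and \ref{app:reduce_analytic_deriv}). The global $C^2(\Pi)$ part of your argument---continuity of $\hat o$, $\dot{\hat o}$, $\ddot{\hat o}$ across the knots is exactly constraints 1--3 enforced by the construction in Theorem~\ref{thm:interp_solution}, and $x\in C^4(\Omega)\subseteq C^2(\Omega)$---coincides with the paper's, which likewise takes $\hat o\in C^2(\Pi)$ and the piecewise-cubic nature of $c$ as given and reduces the whole lemma to the smoothness of $o$ on the open intervals.
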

\begin{proof}
  Since $\hat o\in C^2(\Pi)$, and $c\in\bigcap_{k=1}^{n}C^4(t_{k-1}, t_k)$, hence we only have to prove that $o\in\bigcap_{k=1}^{n}C^4(t_{k-1}, t_k)$. Since $o(t)$ in each time period $(t_{k-1}, t_k)$ is calculated by ODE (Eq.~(\ref{eqn:trans_func}), and (\ref{eqn:ode_output_func})), we can express different orders of the derivative of $o$ as:
  \begin{align}
  \dot o &= \frac{\partial g}{\partial \mbf h}^\intercal f, \\  
  \ddot o &= (\frac{\partial^2 g}{\partial \mbf h^2}f)^\intercal f + \frac{\partial g}{\partial \mbf h}^\intercal\frac{\partial f}{\partial \mbf h}f,
  \end{align}
  \begin{align}
    o^{(3)} &= f^\T(\frac{\p f}{\p \mbf h}\frac{\p^2 g}{\p \mbf h^2} + \frac{\p^3 g}{\p\mbf h^3}\circ f + 2\frac{\p^2g}{\p\mbf h^2}\frac{\p f}{\p\mbf h})f + 
    \frac{\p g}{\p\mbf h}^\T( \frac{\p^2 f}{\p\mbf h^2}\circ f + \frac{\p f}{\p\mbf h} \frac{\p f}{\p\mbf h} )f,
  \end{align}
  \begin{align}
    o^{(4)} =& f^\T\big(\frac{\p^4 g}{\p\mbf h^4}\circ f\circ f + \frac{\p^3 g}{\p\mbf h^3}\circ (\frac{\p f}{\p\mbf h}\frac{\p f}{\p\mbf h}) + 3 \frac{\p^3 g}{\p\mbf h^3}\circ f \frac{\p f}{\p\mbf h} + 2\frac{\p f}{\p\mbf h}\frac{\p^3 g}{\p\mbf h^3}\circ f + 3 \frac{\p^2 g}{\p\mbf h^2}\frac{\p^2 f}{\p\mbf h^2}\circ f \nonumber \\ 
    &+ 3\frac{\p^2 g}{\p\mbf h^2}\frac{\p f}{\p\mbf h}\frac{\p f}{\p\mbf h} + \frac{\p^2 f}{\p\mbf h^2}\circ f \frac{\p^2 g}{\p\mbf h^2} + \frac{\p f}{\p\mbf h} \frac{\p f}{\p\mbf h} \frac{\p^2 g}{\p\mbf h^2}  + 3 \frac{\p f}{\p\mbf h} \frac{\p^2 g}{\p\mbf h^2}\frac{\p f}{\p\mbf h} \big)f \nonumber\\
    &+ \frac{\p g}{\p\mbf h}^\T\big(\frac{\p^3 f}{\p\mbf h^3}\circ f\circ f + \frac{\p^2 f}{\p\mbf h^2}\circ (\frac{\p f}{\p\mbf h}f) + 2\frac{\p^2 f}{\p\mbf h^2}\circ f\frac{\p f}{\p\mbf h} +              \frac{\p f}{\p\mbf h} \frac{\p^2 f}{\p\mbf h^2} \circ f + \frac{\p f}{\p\mbf h}\frac{\p f}{\p\mbf h}\frac{\p f}{\p\mbf h} \big)f, \label{eqn:4th-order-o}
  \end{align}
 where $\frac{\p f}{\p\mbf h}$ is the Jacobian matrix since $f$ is a multi-valued function, $\frac{\p^2 g}{\p\mbf h^2}$ is the Hessian matrix since $o$ is a scalar and $g$ is single-valued function. The higher order derivative than second order (Hessian) becomes multi-dimensional matrix which can not be mathmatically expressed for standard matrix production, so we indicate the product of a multi-dimensional matrix and a vector as $\circ$, which has higher computing priority then normal matrix production.
 From Eq.~(\ref{eqn:4th-order-o}), $o^{(4)}$ depends on $\frac{\p^4 g}{\p \mbf h^4}$ and $\frac{\p^3 f}{\p\mbf h^3}$. 
 Hence given $g\in C^4$ and $f\in C^3$, we obtain $o\in\bigcap_{k=1}^{n}C^4(t_{k-1}, t_k)$.

\end{proof}

\begin{lemma} \citep{birkhoff1967hermite}
  \label{lem:hermite}
  Given any function $v\in C^4(t_k, t_k+1)$, let $u$ be the cubic Hermite interpolation matching $v$, we have 
  \begin{equation}
    |(v(t) - u(t))^{(r)}| \leq A_r(t)||v^{(4)}||_{\infty}\tau^{4-r}\quad\quad (r=0, 1),
  \end{equation}    
    with 
  \begin{equation}
    A_r(t) = \frac{\tau_i^r[(t-t_i)(t_{i+1}-t)]^{2-r}}{r!(4-2r)!\tau^{4-r}} \quad\quad \mrm {if}\ t\in[t_i, t_j).
  \end{equation}
  This is the error bound for cubic Hermite interpolation.
\end{lemma}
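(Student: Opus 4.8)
The plan is to work on a single subinterval $[t_i, t_{i+1}]$ of length $\tau_i$, on which $u$ is the unique cubic matching $v$ and $v'$ at both endpoints, and to set $e = v - u$. The four interpolation conditions give $e(t_i)=e'(t_i)=e(t_{i+1})=e'(t_{i+1})=0$, so $e$ has a double zero at each endpoint, and since $u$ is a cubic we have $e^{(4)} = v^{(4)}$ on the open interval. Both cases ($r=0$ and $r=1$) reduce to controlling $e$ through $v^{(4)}$, and the global $\tau$ enters only cosmetically: the $\tau^{4-r}$ in the denominator of $A_r$ cancels the $\tau^{4-r}$ factored outside, so what is really being proved are the local, $\tau_i$-dependent estimates $|e(t)|\le \frac{1}{24}[(t-t_i)(t_{i+1}-t)]^2\|v^{(4)}\|_\infty$ and $|e'(t)|\le \frac{1}{2}\tau_i(t-t_i)(t_{i+1}-t)\|v^{(4)}\|_\infty$.

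For $r=0$ I would use the standard generalized Rolle argument. Fix an interior $t$, set $w(s)=(s-t_i)^2(s-t_{i+1})^2$ and $\Phi(s)=e(s)-\frac{e(t)}{w(t)}w(s)$. Then $\Phi$ vanishes at $t_i$, $t_{i+1}$ and $t$, a total of five zeros counted with multiplicity, so iterating Rolle four times produces $\xi\in(t_i,t_{i+1})$ with $\Phi^{(4)}(\xi)=0$, i.e. $e(t)=\frac{v^{(4)}(\xi)}{4!}w(t)$. Bounding $|v^{(4)}(\xi)|$ by $\|v^{(4)}\|_\infty$ gives exactly the $r=0$ estimate, and reinserting the $\tau^4$ factor recovers the stated $A_0(t)$.

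For $r=1$ the one-point Rolle construction breaks down: differentiating the $r=0$ remainder is illegitimate because $\xi$ depends on $t$, and the natural multiplier $w$ has a vanishing derivative at the midpoint. Instead I would invoke the Peano kernel representation $e(t)=\int_{t_i}^{t_{i+1}}K(t,s)\,v^{(4)}(s)\,ds$, valid because the error functional annihilates $\Pi_3$ (Hermite interpolation is exact on cubics), and differentiate under the integral to obtain $e'(t)=\int_{t_i}^{t_{i+1}}\partial_t K(t,s)\,v^{(4)}(s)\,ds$, so that $|e'(t)|\le\|v^{(4)}\|_\infty\int_{t_i}^{t_{i+1}}|\partial_t K(t,s)|\,ds$. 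The crucial and least routine step is that $\partial_t K(t,\cdot)$ changes sign — already visible from the monomial $v(s)=(s-t_i)^4$, whose error derivative $e'$ vanishes at the midpoint — so one cannot evaluate the functional on a single monomial and must resolve the kernel's sign pattern, integrating its positive and negative parts separately. Carrying this out gives $\int_{t_i}^{t_{i+1}}|\partial_t K(t,s)|\,ds=\tfrac12\tau_i(t-t_i)(t_{i+1}-t)$, which matches $A_1(t)\tau^3$. Intuitively the factor $(t-t_i)(t_{i+1}-t)$ is forced by the two simple zeros of $e'$ at the endpoints — writing $e'(s)=(s-t_i)(s-t_{i+1})\psi(s)$ — while $\tfrac12\tau_i\|v^{(4)}\|_\infty$ controls the residual amplitude $\psi$.

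The main obstacle is therefore the sign analysis and explicit integration of the $r=1$ Peano kernel, not the essentially routine $r=0$ case. Since the statement is classical and is quoted from \citep{birkhoff1967hermite}, in the paper it suffices to cite it; a self-contained derivation would spend all of its effort precisely on that kernel computation.
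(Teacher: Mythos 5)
The paper never proves this lemma at all—it is imported wholesale from \citet{birkhoff1967hermite}—so your closing remark that citation suffices matches what the paper does, and the only thing to assess is the internal soundness of your sketch. Your $r=0$ argument (double zeros at both knots, auxiliary function, four applications of Rolle) is correct and standard. The problem is in the $r=1$ step, at precisely the point you yourself call ``crucial and least routine'': the asserted outcome of the kernel computation is false. Take $[t_i,t_{i+1}]=[0,1]$ and $t=\tfrac12$. With the Peano representation $e'(t)=\int_0^1 \partial_t K(t,s)\,v^{(4)}(s)\,ds$, $K(t,s)=\tfrac16\bigl[(t-s)_+^3-(1-s)^3H_{01}(t)-3(1-s)^2H_{11}(t)\bigr]$ ($H_{01},H_{11}$ the Hermite basis functions at the right knot), a direct computation gives $\partial_t K(\tfrac12,s)=\tfrac18(1-s)^2(2s-1)$ for $s\ge\tfrac12$, odd under $s\mapsto 1-s$, hence $\int_0^1\bigl|\partial_t K(\tfrac12,s)\bigr|\,ds=\tfrac{1}{384}$, not $\tfrac12\tau_i(t-t_i)(t_{i+1}-t)=\tfrac18$—off by a factor of $48$. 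The coefficient $A_1(t)\tau^3$ is a generous majorant of the kernel integral, not its value; indeed the lemma's bound is far from pointwise sharp (already $v(s)=s^4$ has $\|e'\|_\infty=\tfrac{\sqrt3}{216}\|v^{(4)}\|_\infty$ on the unit interval, nowhere near $\tfrac18\|v^{(4)}\|_\infty$). Since your argument only needs ``$\le$'' rather than ``$=$'', the strategy is repairable, but as written the decisive step is a wrong identity, and nothing in the proposal establishes the needed inequality either.

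There is also a repair that avoids the kernel and its sign analysis entirely, and it explains the anatomy of $A_r$. Because $u$ matches $v'$ at both knots, $e'(t_i)=e'(t_{i+1})=0$. Fix $t$ and apply Rolle twice to
\begin{equation*}
\Phi(s)=e'(s)-\frac{e'(t)}{(t-t_i)(t_{i+1}-t)}\,(s-t_i)(t_{i+1}-s),
\end{equation*}
which vanishes at $t_i$, $t$, $t_{i+1}$; since the comparison quadratic has second derivative $-2$, there is $\xi$ with $e'(t)=-\tfrac12 e'''(\xi)(t-t_i)(t_{i+1}-t)$, i.e.\ $|e'(t)|\le\tfrac12(t-t_i)(t_{i+1}-t)\,\|e'''\|_\infty$. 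Separately, the four zeros of $e$ counted with multiplicity yield, by repeated Rolle, an interior zero $\theta$ of $e'''$, and since $e^{(4)}=v^{(4)}$ on the open interval, $|e'''(s)|=\bigl|\int_\theta^s v^{(4)}\bigr|\le\tau_i\|v^{(4)}\|_\infty$. Chaining the two displays gives exactly $|e'(t)|\le A_1(t)\|v^{(4)}\|_\infty\tau^{3}$: the factor $\tfrac{1}{(4-2r)!}=\tfrac12$ comes from the quadratic-multiplier Rolle step, and the factor $\tau_i^{r}=\tau_i$ from one integration of $v^{(4)}$ away from a zero of $e'''$. This is the elementary route you should substitute for the Peano computation.
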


 Given the above lemmas, we are ready to the formal proof for Theorem~\ref{thm:err_bnd}.\\
 \textbf{Proof of Theorem~\ref{thm:err_bnd}}:
 \begin{proof}
 
 According to Lemma~\ref{lem:hermite}, we let $v=x-o$ (since $x-o\in C^4(t_k, t_{k+1})$ according to Lemma~\ref{lem:continue}), let $u$ be the cubic Hermite interpolation matching $v$, then we have 
  \begin{equation}
    |(x(t) - o(t) - u(t))^{(r)}| \leq A_r(t)||(x-o)^{(4)}||_{\infty}\tau^{4-r}\quad\quad (r=0, 1),
  \end{equation}
    with 
  \begin{equation}
    A_r(t) = \frac{\tau_i^r[(t-t_i)(t_{i+1}-t)]^{2-r}}{r!(4-2r)!\tau^{4-r}} \quad\quad \mrm {if}\ t\in[t_i, t_j).
  \end{equation}

Let $e=x-\hat o$, $\varepsilon=u-c$, then $\varepsilon$ is cubic Hermite interpolation implying
\begin{equation}
  \varepsilon(t_k)=x(t_k)-o(t_k)-c(t_k)=x(t_k) - \hat o(t_k)=0, 
  \label{eqn:eps0}
\end{equation}
and 
\begin{equation}
 \dot \varepsilon(t_k)=\dot x(t_k)-\dot o(t_k)-\dot c(t_k)=\dot x(t_k) - \dot {\hat o}(t_k)=\dot e(t_{k}).
 \label{eqn:eps1}
\end{equation}
Therefore, $\varepsilon(t)$ in $[t_k, t_{k+1})$ is a cubic function with endpoints satisfying  can be constructed for each interval $[t_k, t_{k+1})$ as Eq.~(\ref{eqn:eps0}) and (\ref{eqn:eps1}), which can be reconstructed as 
\begin{equation}
  \varepsilon(t) = \dot e(t_{k})H_1(t) + \dot e(t_{k+1})H_2(t), \label{eqn:eps_cubic}
\end{equation}
where 
\begin{align}
  H_1(t) &= (t-t_k)(t-t_{k+1})^2/\tau_k^2,\\
  H_2(t) &= (t-t_{k+1})(t-t_k)^2/\tau_k^2.
\end{align}

From Lemma~\ref{lem:e_bnd} and Lemma~\ref{lem:continue}, $e$ is bounded as Eq.~(\ref{eqn:bound_e}).
Combining Eq.~(\ref{eqn:bound_e}) and (\ref{eqn:eps_cubic}) yields:
\begin{equation}
  |\varepsilon^{(r)}(t)| \leq \rho_1||e^{(4)}||_\infty\big ( |H_1^{(r)}(t)| + |H_2^{(r)}(t)| \big )\tau^3 \quad \quad (r=0,1),
\end{equation}
which is rewritten as 
\begin{equation}
  |(x(t)-\hat o(t))^{(r)}| \leq B_r(x) ||e^{(4)}||_\infty \tau^{4-r} =  B_r(x) ||(x-o)^{(4)}||_\infty \tau^{4-r}, \label{eqn:bnd_hermite_spline}
\end{equation}
with $B_r(t)=\rho_1\big (|H_1^{(r)}(t)| + |H_2^{(r)}(t)|\big )\tau^{r-1}$.

Using Triangle inequality, Lemma~\ref{lem:hermite}, and Eq.~\ref{eqn:bnd_hermite_spline}, it issues 
\begin{align}
  |x^{(r)}(t) - \hat o^{(r)}(t)| &= |x^{(r)}(t) - o^{(r)}(t) - u^{(r)}(t) + u^{(r)}(t) - c^{(r)}(t)| \nonumber\\
  & \le |x^{(r)}(t) - o^{(r)}(t) - u^{(r)}(t)| + |u^{(r)}(t) - c^{(r)}(t)|\nonumber\\
  & \le (A_r(x) + B_r(x)) ||(x-o)^{(4)}||_\infty\tau^{4-r}.
\end{align}
Let $C_r(x) = A_r(x) + B_r(x)$, and an analysis of the optimality~\citep{hall1976optimal} yields
\begin{equation}
  C_0(x) \le \frac{5}{384}; \qquad C_1(x) \le \frac{1}{12}.
\end{equation}
\end{proof}

\section{Computational Complexity}
\subsection{Implementation of the Inverse of Matrix}
\label{app:inverse_matrix}
 For sake of simplicity, our Pytorch implementation adopts $\mathtt{torch.inverse}$ to compute $A^{-1}$ in Eq.~(\ref{eqn:comput_M}), which is actually the implementation of the LU composition using partial pivoting with best complexity $O(n^2)$. Its complexity is higher than the complexity of tridiagonal matrix algorithm $O(n)$, whose implementation will be left for future work.

 \begin{table}
 \caption{The interpolation MSE for on toy sinuous wave test set at different observation ratio. We compare the analytical and numerical differentiation of $\dot{\mbf o}$ and $\ddot{\mbf o}$ for CSSC under different settings, where \emph{block} means block gradient, \emph{drop} indicates set as zero, and \emph{CSSC} is the standard implementation.}
 \as{1.2}
 \begin{tabular}{lrrrcrrr}
 \toprule
 & \multicolumn{3}{c}{Analytical} & \phantom{a}& \multicolumn{3}{c}{Numerical}\\
 \cmidrule{2-4} \cmidrule{6-8} \
 Observation & 10\% & 30\% & 50\% && 10\% & 30\% & 50\% \\ 
 \midrule
 Block $\dot{\mbf o}$, $\ddot{\mbf o}$ & 10.601951 & 0.001408 & 0.000150 && - & - & -\\% && 267397.3 & 0.003255 &0.000158 \\
 Block $\dot{\mbf o}$ & 0.811334 & 0.000721 & 0.000142 && - & - & - \\% && 0.524129 & 0.000805 & 0.000155 \\
 Block $\ddot{\mbf o}$ & 0.406867 & 0.000348 & 0.000121 && - & - & - \\% && 1.237019 & 0.004387 & 0.000127 \\
 Drop $\ddot {\mbf o}$ & 0.072519 & 0.000356 & 0.000123 && - & - & -\\% && 0.277847 & 0.000371 & 0.000128 \\
 CSSC & \textbf{0.024656} & \textbf{0.000457} & \textbf{0.000128} && \textbf{0.067881}  & \textbf{0.000397} & \textbf{0.000126}\\
 \bottomrule
 \end{tabular}
  \label{tab:numeric_analytic_comp}
 \end{table}

 \subsection{Computation of $\dot o$ and $\ddot o$}
 \label{app:numeric_derivative}
 As Eq.~\ref{eqn:dot_o} indicates, computing $\dot o$ and $\ddot o$ requires the Hessian and Jacobian of $g$ and the Jacobian of $f$. 
 These Jacobians and Hessians will first participate in the inference stage and then are involved in the gradient backpropagation.
 However, the latest Pytorch\footnote{Pytorch Version 1.6.0, updated in July 2020.} does not support the computing of Jacobian and Hessian in batch, so the training process will be very slow in practice, rendering the computing cost prohibitively high.
 Therefore, we propose two ways to circumvent such issues from both numerical and analytical views.
 
 \textbf{Numerical Differentiation}.
 The first solution is to use numerical derivative. 
 We approximate the left limitation and right limitation of $\dot o(t)$ and $\ddot o(t)$ as
 \begin{align}
   \dot o(t^-) &= \frac{o(t^-) - o(t - \Delta t)}{\Delta} \\
   \ddot o(t^-) &= \frac{o(t^-) - 2o(t-\Delta) + o(t-2\Delta)}{\Delta^2}\\
   \dot o(t^+) &= \frac{o(t + \Delta t) - o(t^+)}{\Delta} \\
   \ddot o(t^+) &= \frac{o(t + 2\Delta t) - 2o(t + \Delta t) + o(t^+)}{\Delta^2} 
 \end{align}
 where $\Delta=0.001$. 
 In this way, we can avoid computing Jacobian or Hessian, and the computational complexity almost remains the same as ODE-RNN.
 The last row of Table.~\ref{tab:numeric_analytic_comp} shows that the numerical differentiation can maintain the same performance with the analytical solution with 30\% and 50\% observation. 
 However, when observations become sparser, e.g., 10\%, the analytical differentiation will gain a better performance.

 \textbf{Analytical Approximation}.
 For analytical solution, the major computing burden is the Hessian matrix; thus we approximate $\ddot o$ with Eq.~(\ref{eqn:approx_dot_o}) by simply dropping the Hessian term. The motivation and rationality is detailed in Appx.~\ref{app:reduce_analytic_deriv}.
 
 \subsection{Computation Reduction for Analytical Derivative} \label{app:reduce_analytic_deriv}
 To further reduce the computation for analytical derivative of $\dot o$ and $\ddot o$,  we investigate whether blocking the gradient of $\dot o$ or $\ddot o$ will affect the interpolation performance. 
 From first three rows in Table.~\ref{tab:numeric_analytic_comp}, we can see that performance of blocking the gradient of $\dot o$ is worse than that of blocking $\ddot o$, indicating $\dot o$ is more important than $\ddot o$ in terms of computing the compensation $\mbf c$.
 In light of this, we further drop $\ddot o$ in the Eq.~(\ref{eqn:ddot_r}), meaning the second order jump difference $\ddot r_k$ is zero. 
 According to the performance shown in the fourth row of Table.~\ref{tab:numeric_analytic_comp}, $\ddot o$ has minor impact to the compensation $c$ when the observation is dense.
 We investigate the reason by check how $\ddot o$ impact the computation of $c$ and find that they are correlated by $d_k$ in Eq.~(\ref{eqn:comput_M}). We write $d_k$ again here for better clarification:
 \begin{equation}
 \label{d_k}
     d_k=6\frac{\epsilon[t_k^+, t_{k+1}^-] - \epsilon[t_{k-1}^+, t_k^-]}{\tau_{k-1} + \tau_k} + \frac{6\dot r_k - 2\ddot r_k\tau_{k-1} - \ddot r_{k+1}\tau_k}{\tau_{k-1} + \tau_k}.
 \end{equation}
From (Eq.~(\ref{eqn:dot_r}, \ref{eqn:ddot_r})) and second term of above equation (Eq.~(\ref{d_k})), we noticed that  $\dot r_k, \ddot r_k$ serve as the bridge between $\dot o$ and $\ddot o$ and $d_k$, indicating that the importance of $\dot o$ and $\ddot o$ in generating compensation $c$ can be examined by estimating the relative significance of $\dot r_k, \ddot r_k$ appear in $d_k$.
We can denote the relative significance as fraction of terms included $\ddot r_k$ and $\dot r_k$ as $s =\frac{|2\ddot r_k\tau_{k-1} + \ddot r_{k+1}\tau_k|}{|6\dot r_k|} \leq \frac{3\max\{|\ddot r_k\tau_{k-1}|, |\ddot r_{k+1}\tau_k|\}}{6|\dot r_k|}$. 
Without loss of generality, we let $\max\{|\ddot r_k\tau_{k-1}|, |\ddot r_{k+1}\tau_k|\} = |\ddot r_k\tau_{k-1}|$, then we have $s \leq\frac{3\max\{|\ddot r_k\tau_{k-1}|, |\ddot r_{k+1}\tau_k|\}}{6|\dot r_k|} = \frac{|\ddot r_k\tau_{k-1}|}{2|\dot r_k|}$.
It shows that $\ddot r_k$ has coefficient of $\tau_{k-1}$ on numerator, which is the time interval between two adjacent observations.
In our experiment, 100 samples is uniformly sampled in 5s, and certain percent of the samples will be selected as observations.
In this setting, if we have 50\% samples as observations, the average $\tau_k=5/50=0.1$; thus $s$ can be informally estimated as $s\leq\frac{|\ddot r_k|}{20 |\dot r_k|}$, indicating $\dot o$ is more important than $\ddot o$ in our experiments.
As the observation ratio is higher, the $\tau_k$ becomes smaller, hence the relative significance of $\ddot r_k$ and $\dot r_k$ will become even larger.
Armed with the above intuition that $\ddot o$ is less important and the fact that he Hessian contribute the major complexity $O(W^2)$, we drop the term with Hessian and find a better approximation of $\ddot o$, leading the final approximation of $\ddot o$ as:
\begin{equation}
  \ddot o = \frac{\partial g}{\partial \mbf h}^\intercal\frac{\partial f}{\partial \mbf h}f. \label{eqn:approx_dot_o}
\end{equation}
Such approximation yields descent performance in practice.
In addtion, because $\mbf o$ is multi-variable and $g$ is multi-valued function in practice, and such Jacobian needs to run in batch.
We implement our own Jacobian operation to tackle these difficulties and can run fast.

\section{Implementation Details}
\label{app:implement_detail}
 The neural ODE state $\mbf h$ has size 15. $f$ is a 5-layer MLP with hidden state size 300. $g$ is a 2-layer MLP with hidden state size 300. 
 RNNCell has hidden state size 100. 
 To obey the condition $f\in C^3$ and $g\in C^4$ required by the error bound in Theorem~\ref{thm:err_bnd}, we select all the nonlinear function as tanh function.
 The $\alpha$ is selected as 1000 based on the ablation study of $\alpha$ in Sec.~\ref{sec:abla}.
 The network is optimized by AdaMax~\citep{kingma2014adam} with $\beta_1=0.9$, $\beta_2=0.999$, and learning rate 0.02.

 \begin{table}[t]\centering
 \caption{The MSE of CSSC for different $\alpha$ on MuJoCo test set. It compares the different data samples.}
 \as{1.2}
 \begin{tabular}{@{}lrrrrrr@{}}
 \toprule
 $\alpha$ & 0 & 1 & 10 & 100 & 1000 & 10000 \\
 \midrule
 10\% & 0.006551 &  0.011915 & 0.009691 & \textbf{0.005421} & 0.006097 & 0.005886 \\
 30\% & 0.000745 & 0.000463 & 0.000426 & 0.000400 & \textbf{0.000375}  & 0.000422 \\
 50\% & 0.000126 & 0.000102 & 0.000074  & 0.000072 & 0.000087 & \textbf{0.000057} \\
 \bottomrule
 \end{tabular}
 \label{tab:abla_alpha}
 \end{table}

 \begin{figure}[t!] 
   \centering\includegraphics[width=1\columnwidth]{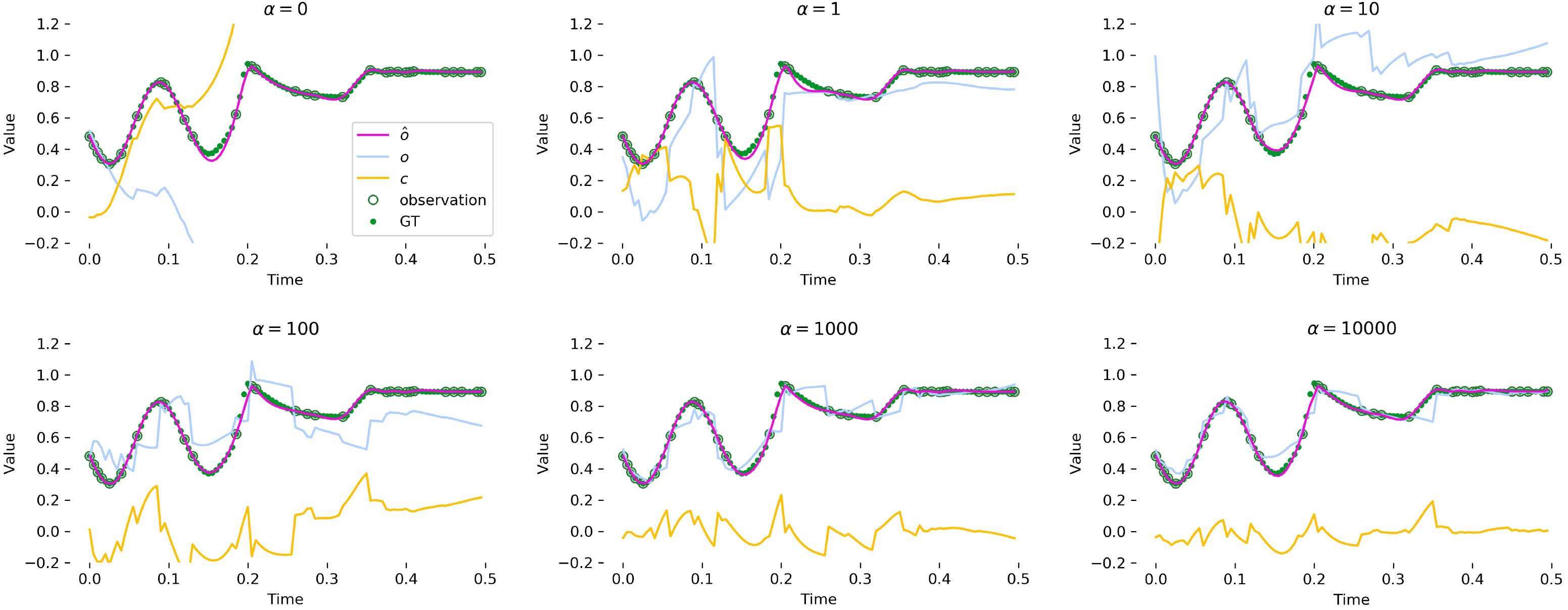} 
   \caption{The interpolation MSE of CSSC with different $\alpha$ of MuJoCo dataset. The curve is the 4-th dimension of the hopper's state.}
   \label{fig:abla_alpah}
 \end{figure} 

\end{document}